\let\NAT@parse\undefined
\newtheorem{definition}{Definition}
\newtheorem{theorem}{Theorem}
\newtheorem{remark}{Remark}
\def\BibTeX{{\rm B\kern-.05em{\sc i\kern-.025em b}\kern-.08em
    T\kern-.1667em\lower.7ex\hbox{E}\kern-.125emX}}
\begin{document}

\title{AdvSGM: Differentially Private Graph Learning via Adversarial Skip-gram Model}

\author{
    \IEEEauthorblockN{Sen Zhang$^{1}$, Qingqing Ye$^{1}$, Haibo Hu$^{1}$, Jianliang Xu$^2$}
    \IEEEauthorblockA{$^1$ The Hong Kong Polytechnic University, Hong Kong}
    \IEEEauthorblockA{$^2$ Hong Kong Baptist University, Hong Kong}
    \IEEEauthorblockA{\{senzhang, qqing.ye, haibo.hu\}@polyu.edu.hk, xujl@comp.hkbu.edu.hk}
}

\maketitle

\begin{abstract}
The skip-gram model (SGM), which employs a neural network to generate node vectors, serves as the basis for numerous popular graph embedding techniques. However, since the training datasets contain sensitive linkage information, the parameters of a released SGM may encode private information and pose significant privacy risks. Differential privacy (DP) is a rigorous standard for protecting individual privacy in data analysis. Nevertheless, when applying differential privacy to skip-gram in graphs, it becomes highly challenging due to the complex link relationships, which potentially result in high sensitivity and necessitate substantial noise injection. To tackle this challenge, we present AdvSGM, a differentially private skip-gram for graphs via adversarial training. Our core idea is to leverage adversarial training to privatize skip-gram while improving its utility. Towards this end, we develop a novel adversarial training module by devising two optimizable noise terms that correspond to the parameters of a skip-gram. By fine-tuning the weights between modules within AdvSGM, we can achieve differentially private gradient updates without additional noise injection. Extensive experimental results on six real-world graph datasets show that AdvSGM preserves high data utility across different downstream tasks. 
\end{abstract}

\begin{IEEEkeywords}
Differential privacy, Skip-gram model, Adversarial training, Graph embedding
\end{IEEEkeywords}

\section{Introduction}\label{sec:Intro}
Graph embedding, which has attracted increasing research attention, represents nodes by low-dimensional vectors while preserving the inherent properties and structures of the graph. In this way, well-studied machine learning algorithms can be easily applied for further mining tasks like clustering, classification, and prediction. Skip-gram models (SGMs) are a popular class of graph embedding models thanks to their simplicity and effectiveness, including DeepWalk~\cite{perozzi2014deepwalk}, LINE~\cite{tang2015line}, and node2vec~\cite{grover2016node2vec}. However, SGMs, which capture not only general data characteristics but also specific details about individual data, are vulnerable to adversarial attacks, particularly user-linkage attacks~\cite{korolova2008link} that exploit the linkage information between nodes to infer whether an individual is present in the training dataset. Therefore, node embeddings need to be sanitized with privacy guarantees before they can be released to the public.

Differential privacy (DP)~\cite{dwork2014algorithmic} is a well studied statistical privacy model recognized for its rigorous mathematical underpinnings.
In this paper, we study the problem of achieving privacy-preserving skip-gram for graphs under differential privacy. The work most related to ours is by Ahuja \emph{et al.}~\cite{ahuja2020differentially}, where private learning from sparse location data is achieved by leveraging SGM in conjunction with differentially private stochastic gradient descent (DPSGD)~\cite{abadi2016deep}, a prevalent approach for differentially private training in deep neural networks. Despite its success, this method cannot be extended to skip-gram for graphs, as DPSGD is inherently more suitable for structured data with well-defined individual gradients and does not adapt effectively to graph data. The primary reason for this limitation is that individual examples in a graph are no longer independently computed for their gradients. As a result, achieving differentially private skip-gram for graphs typically encounters significant noise due to high sensitivity, making it challenging to strike a balance between privacy and utility.

Adversarial training, originally developed to protect against adversarial attacks in computer vision~\cite{goodfellow2014explaining}, has proven effective in enhancing the robustness of deep learning models. Recently, adversarial training is used to enhance the robustness and generalizability of skip-gram~\cite{li2023robust}.
Inspired by this, we propose AdvSGM, a differentially private skip-gram that injects differential noise into the activation function of adversarial training. This approach leverages adversarial training to achieve privacy protection while enhancing the utility of skip-gram. Nevertheless, achieving this is challenging as the injected noise may be eliminated during gradient calculations. To overcome this issue, we design a novel adversarial training module by creating two optimizable noise terms associated with the parameters of the skip-gram that require optimization. Through fine-tuning the weights of the skip-gram and adversarial training modules, we demonstrate that the combined noise effectively contributes to privacy protection throughout the optimization process. Additionally, we provide empirical evidence to support the rationale behind this tuning.
Our main contributions are listed as follows.
\begin{itemize}
[leftmargin=5mm]
  \item We present AdvSGM, a novel differentially private skip-gram for graphs that incorporates adversarial training. To our best knowledge, it is the \textbf{\emph{first}} approach which utilizes adversarial training to privatize skip-gram while improving its utility.
  \item We design a novel adversarial training module by introducing two carefully crafted noise terms in activation functions. By fine-tuning the weights between modules, we can achieve differential privacy without requiring additional noise injection during optimization.
  \item Through formal privacy analysis, we prove that our AdvSGM guarantees node-level differential privacy. Extensive experiments on six real graph datasets demonstrate that our solution significantly surpasses existing state-of-the-art private graph embedding methods in both link prediction and node clustering tasks.
\end{itemize}

The remainder of this paper is organized as follows. Section~\ref{Preliminary} covers the preliminaries. In Section~\ref{sec:ProbDef}, we formulate the problem and introduce a naive solution. Our proposed AdvSGM is detailed in Section~\ref{sec:AdvSGM}, and its privacy and time complexity are discussed in Section~\ref{sec:PrivPRM_PrivCompAnalysis}. Comprehensive experimental results are presented in Section~\ref{sec:experiments}. Section~\ref{sec:Related_work} reviews related works. Finally, a conclusion is drawn in Section~\ref{sec:conclu}.

\section{Preliminaries}\label{Preliminary}
In this section, we will provide a brief overview of adversarial skip-gram model, differential privacy and DPSGD, highlighting their key concepts and important properties. The notations frequently used in this paper are summarized in Table~\ref{tab:SymbolTable}.

\subsection{Notations}
We define an undirected graph as $\mathcal{G} = (V, E)$, where $V$ is the set of nodes and $E$ is the set of edges. For nodes $v_i, v_j \in V$, the pair $(i, j) \in E$ represents an edge in $\mathcal{G}$. The goal of graph embedding is to learn a function $f: V \rightarrow \mathbf{W}$, where $\mathbf{W} \in \mathbb{R}^{|V| \times r}$ is a matrix with embedding dimension $r \ll |V|$, while preserving the intrinsic properties and structures of the original graph $\mathcal{G}$. We denote $\mathbf{v}_i = f(v_i)$ as the vector embedding of node $v_i$.

\begin{table}[htb!] 
  \centering
  \begin{threeparttable}
      \caption{Frequently used symbols}\label{tab:SymbolTable}
      \begin{tabular}{l|l}
        \hline   
        Symbol & Description \\
        \hline   
        {$\epsilon, \delta$} & {Privacy parameters} \\
        {$\mathcal{G}, \overline{\mathcal{G}}$} & {Original graph and neighboring graph} \\
        {$\mathcal{A}$} & {A randomized algorithm} \\
        {$\alpha$} & {Order of Rényi divergence} \\
        {$D_\alpha$} & {Rényi divergence of order $\alpha$} \\
        {$|V|, |E|$}  & {Number of nodes and edges in $\mathcal{G}$} \\
        {$\mathbf{x},\mathbf{y}$} & {Lowercase letters denoting vectors}  \\
        {$\mathbf{x}\cdot\mathbf{y}$} & {Inner product between two vectors}  \\
        {$\mathbf{X},\mathbf{Y}$} & {Bold capital letters denoting matrices} \\
        {$\mathbf{I}$} & {Identity matrix} \\
        {$\mathbf{W}_{in},\mathbf{W}_{out}$} & {Embedding matrices of skip-gram} \\
        {$k$} & {Negative sampling number} \\
        {$r$} & {Dimension of low-dimensional vectors} \\
        {$B$} & {Number of samples} \\
        {$\gamma$} & {Sampling probability} \\
        {$G, D$} & {Generator and discriminator} \\
        {${L}_{sgm}^D$} & {Loss function for structure preservation in $D$} \\
        {${L}_{adv}^D$} & {Loss function for adversarial training in $D$} \\
        {$L^G$} & {Loss function of generator $G$} \\
        {$\lambda$} & {Weight between modules} \\
        {$\mathcal{N}_{D,1},\mathcal{N}_{D,2}$} & {Gaussian noise for $D$} \\
        {$\mathcal{N}_{G,1},\mathcal{N}_{G,2}$} & {Gaussian noise for $G$} \\
        {$\Theta^D, \Theta^G$} & {Parameter sets of $D$ and $G$} \\
        {$n^D, n^G$} & {Number of epochs for $D$ and $G$} \\
        {$f \circ g$} & {Composition of two functions} \\
        \hline
      \end{tabular}
  \end{threeparttable}
\end{table}

\subsection{Adversarial Skip-gram Model}
The skip-gram model is a neural network architecture that learns word embeddings by predicting the surrounding words given a target word. In the context of graph embedding, each node in the network can be thought of as a ``word'', and the surrounding words are defined as the nodes that co-occur with the target node in the network.
Inspired by this setting, DeepWalk~\cite{perozzi2014deepwalk} achieves graph embedding generation by treating the paths traversed by random walks over networks as sentences and using skip-gram to learn latent representations of nodes. With the advent of DeepWalk, several skip-gram based graph embedding generation models have emerged,
including LINE~\cite{tang2015line}, PTE~\cite{tang2015pte}, and node2vec~\cite{grover2016node2vec}. 
Recently, adversarial training is used to improve the utility of skip-gram~\cite{li2023robust}. The enhanced model mainly consists of two components: generator and discriminator.

\subsubsection{Generator}
The generator $G$ incorporates two generators $G_{v_i^\prime}$ and $G_{v_j^\prime}$. The neighbor generator $G_{v_j^\prime}= \phi(\mathcal{N}_{G,1}(\sigma^2\mathbf{I})\cdot{\theta_{v_j^\prime}^G})$ for the real node $v_i$ and the neighbor generator $G_{v_i^\prime}=\phi(\mathcal{N}_{G,2}(\sigma^2\mathbf{I})\cdot{\theta_{v_i^\prime}^G})$ for the real node $v_j$, where $\phi$ denotes an activation function that enables non-linear mappings, such as the \emph{Sigmoid} function, $\theta_{v_j^\prime}^G$ and $\theta_{v_i^\prime}^G$ denote the parameters to be optimized, and Gaussian distributions $\mathcal{N}_{G,1}$ and $\mathcal{N}_{G,2}$ generate noise vectors.
Both generators produce embeddings $\mathbf{v}_i^\prime$ and $\mathbf{v}_j^\prime$, representing the generated \textbf{\emph{fake neighbors}} $v_i^\prime$ and $v_j^\prime$ for $v_j$ and $v_i$ respectively. By doing so, we obtain two node pairs: $(v_i,v_j^\prime)$ and $(v_i^\prime,v_j)$, where $v_i$ and $v_j$ are real nodes obtained from the discriminator. The primary purpose of generator $G$ is to deceive the discriminator, and thus its loss function ${L}^G$ is determined as follows:
\begin{equation}\label{eq:NaiveGenLoss}
  \begin{split}
      {L}^G
      =& \min\mathbb{E}_{(i,j) \in E} (\log(1 - F(\mathbf{v}_i\cdot\mathbf{v}_j^\prime)) \\
       & + \log(1 - F(\mathbf{v}_i^\prime\cdot\mathbf{v}_j))).
  \end{split}
\end{equation}

The function $F(\cdot)$ acts as the \emph{discriminant function} within the discriminator. Its output ranges between 0 and 1, representing the probability that the input node pairs $(v_i,v_j^\prime)$ and $(v_i^\prime,v_j)$ are considered positive.

\subsubsection{Discriminator} 
The discriminator is divided into two modules: one module is a graph structure reservation module (i.e., skip-gram) for learning graph structure and the other is an adversarial training module to improve the performance of the model.

\underline{\emph{Skip-gram Module.}}
The purpose of the graph structure preservation module is to preserve the original graph structure in the low-dimensional embedding space. Many skip-gram based graph embedding methods can be used directly as the graph structure preservation module, such as
DeepWalk~\cite{perozzi2014deepwalk}, LINE~\cite{tang2015line},
etc. Taking LINE as an example, for each node pair $(v_i,v_j)$, the loss function $L_{sgm}^D$ is
  \begin{equation}\label{eq:NoveSGM_Eq}
      L_{sgm}^D
    = \log \sigma\left(\mathbf{v}_i \cdot \mathbf{v}_j\right) + \sum_{n=1}^k\mathbb{E}_{v_j^n \sim \mathbb{P}_{n}(v)}\left[\log \sigma\left(-\mathbf{v}_j^n \cdot \mathbf{v}_i\right)\right],
  \end{equation}
where $k$ is the number of negative samples, and $\mathbb{P}_n(v)$ is the sampling distribution of \textbf{\emph{negative samples}}.
\begin{remark}
In this paper, positive samples are node pairs drawn from the edge set $E$. For negative sampling, we use the starting node of a positive sample and pair it with a randomly selected node from the node set $V$. As a result, negative samples can include pairs where the edge $(i,j)$ is either in $E$ or not in $E$ (see Algorithm~\ref{Alg:gene_subGra} for details).
\end{remark}

\underline{\emph{Adversarial Training Module.}}
    The objective of the adversarial training module is to determine the authenticity of the input node pair. Given an input node pair $(v_i,v_j)$, we utilize the \emph{Sigmoid} function as the discriminant function, which provides the probability that the node pair is true. In order to achieve this, the generator $G$ generates synthetic neighbor nodes $v_j^\prime$ and $v_i^\prime$ for nodes $v_i$ and $v_j$ respectively. Subsequently, two node pairs $(v_i,v_j^\prime)$ and $(v_i^\prime,v_j)$ are formed. By incorporating the adversarial training module using the discriminant function $F(\cdot)$, the loss function can be derived as
    \begin{equation}\label{eq:ATM_Eq}
      \begin{split}
         {L}_{adv}^D
        =&\mathbb{E}_{(i,j) \in E} (-\log(1 - F(\mathbf{v}_i\cdot\mathbf{v}_j^\prime)) \\ 
        &- \log(1 - F(\mathbf{v}_i^\prime\cdot\mathbf{v}_j))). 
      \end{split}
    \end{equation}

By combining the graph structure preservation module and the adversarial training module, the loss function of the discriminator $D$ can be expressed as
\begin{equation}\label{eq:dis_obj}
    \begin{split}
       L^D = L_{sgm}^D + \lambda L_{adv}^D,
    \end{split}
\end{equation}
where $\lambda>0$ denotes a weight that controls the relative importance of $L_{sgm}^D$ and $L_{adv}^D$.

\begin{remark}
In Eq.\,(\ref{eq:NoveSGM_Eq}), $\sigma(\cdot)$ denotes the Sigmoid function, which is widely used in skip-gram models. In this paper, both the discriminant function $F(\cdot)$ in Eqs.\,(\ref{eq:NaiveGenLoss}) and (\ref{eq:ATM_Eq}) and the activation function $\phi(\cdot)$ also use the Sigmoid function. These settings are beneficial for achieving our goal of utilizing adversarial training to privatize the skip-gram model while improving its utility.
\end{remark}

\subsection{Differential Privacy}\label{sec:DP_properties}
\textbf{$(\epsilon,\delta)$-DP.} Differential privacy (DP)~\cite{dwork2006calibrating} is the prevailing concept of privacy for algorithms operating on statistical databases. In simple terms, DP restricts the extent to which the output distribution of a mechanism can change when there is a small change in its input. When dealing with graph data, the notion of neighboring databases is established by considering two graph datasets: $\mathcal{G}$ and $\overline{\mathcal{G}}$. These datasets are considered neighbors if their difference is limited to at most one edge or node.

\begin{definition}[Edge (Node)-Level DP~\cite{hay2009accurate}]\label{DP_def}
A graph analysis mechanism $\mathcal{A}$ is said to satisfy edge- or node-level $(\epsilon, \delta)$-DP, if, for any two neighboring graphs $\mathcal{G}$ and $\overline{\mathcal{G}}$ (which differ by at most one edge or node), and for all possible sets $O \subseteq Range(\mathcal{A})$, we have
$\mathbb{P}[\mathcal{A}(\mathcal{G}) \in O] \leq \exp(\epsilon) \cdot \mathbb{P}[\mathcal{A}(\overline{\mathcal{G}}) \in O] + \delta$.
\end{definition}

The concept of neighboring datasets $\mathcal{G}$ and $\overline{\mathcal{G}}$ can be categorized into two types. Specifically, if $\overline{\mathcal{G}}$ is derived by replacing a single data point in $\mathcal{G}$, it is referred to as bounded DP~\cite{dwork2006calibrating}. On the other hand, if $\overline{\mathcal{G}}$ is obtained by adding or removing a data point from $\mathcal{G}$, it is known as unbounded DP~\cite{dwork2006differential}. The privacy parameter $\epsilon$ represents the privacy budget, which quantifies the trade-off between privacy and utility in the algorithm. A smaller value of $\epsilon$ indicates a stronger privacy guarantee. The parameter $\delta$ is typically chosen to be very small and is informally referred to as the failure probability, representing the likelihood of a privacy violation.

\emph{Important properties.} DP has the following important properties that help us design complex algorithms from simpler ones:

\begin{theorem}[Sequential Composition~\cite{dwork2014algorithmic}]\label{Theo:RDP_comp}
Let $f(\mathcal{G})$ be $\left(\epsilon_1, \delta_1\right)$-DP and $g(\mathcal{G})$ be $\left(\epsilon_2, \delta_2\right)$-DP, then the mechanism $F(\mathcal{G}) = (f(\mathcal{G}), g(\mathcal{G}))$ which releases both results  satisfies $\left(\epsilon_1+\epsilon_2, \delta_1+\delta_2\right)$-DP.
\end{theorem}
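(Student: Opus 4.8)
The plan is to establish the one-sided inequality $\mathbb{P}[F(\mathcal{G}) \in O] \le e^{\epsilon_1+\epsilon_2}\,\mathbb{P}[F(\overline{\mathcal{G}}) \in O] + \delta_1 + \delta_2$ for an arbitrary ordered pair of neighboring graphs $\mathcal{G}, \overline{\mathcal{G}}$ and an arbitrary (measurable) $O \subseteq Range(f)\times Range(g)$; since Definition~\ref{DP_def} already quantifies over all ordered neighbor pairs, this suffices. As is standard when ``releasing both results'' $F=(f,g)$, I assume $f$ and $g$ use independent internal randomness. Write $P_1,Q_1$ for the output distributions of $f$ on $\mathcal{G}$ and $\overline{\mathcal{G}}$, and $P_2,Q_2$ for those of $g$; by independence, $F(\mathcal{G})\sim P_1\otimes P_2$ and $F(\overline{\mathcal{G}})\sim Q_1\otimes Q_2$.

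First I would record the ``$[0,1]$-test-function'' reformulation of $(\epsilon,\delta)$-DP: if $P(S)\le e^{\epsilon}Q(S)+\delta$ for every measurable $S$, then $\int h\,dP \le e^{\epsilon}\int h\,dQ + \delta$ for every measurable $h$ with $0\le h\le 1$. This is immediate from the layer-cake identity $\int h\,d\mu = \int_0^1 \mu(\{h>t\})\,dt$ applied to $P$ and $Q$, combined with the defining inequality for each threshold set $\{h>t\}$. Next, by Tonelli's theorem, writing $O_y = \{z : (y,z)\in O\}$ for the slice, we have $\mathbb{P}[F(\mathcal{G})\in O] = \int P_2(O_y)\,dP_1(y)$ and $\mathbb{P}[F(\overline{\mathcal{G}})\in O] = \int Q_2(O_y)\,dQ_1(y)$, and $y\mapsto P_2(O_y)$, $y\mapsto Q_2(O_y)$ are measurable and $[0,1]$-valued.

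The crux is to combine the two guarantees without inflating $\delta_2$ by a factor $e^{\epsilon_1}$. For each fixed $y$, the $(\epsilon_2,\delta_2)$-DP of $g$ gives $P_2(O_y)\le e^{\epsilon_2}Q_2(O_y)+\delta_2$, so the pointwise identity $P_2(O_y) = \min\!\big(P_2(O_y),\,e^{\epsilon_2}Q_2(O_y)\big) + \big(P_2(O_y)-e^{\epsilon_2}Q_2(O_y)\big)^{+}$ has second summand bounded by $\delta_2$ everywhere. Integrating against $P_1$: the second summand contributes at most $\delta_2$, using only that $P_1$ is a probability measure; the first summand lies in $[0,1]$ and is pointwise $\le e^{\epsilon_2}Q_2(O_y)$, so applying the $[0,1]$-test-function form of the $(\epsilon_1,\delta_1)$-DP of $f$ bounds its $P_1$-integral by $e^{\epsilon_1}\!\int e^{\epsilon_2}Q_2(O_y)\,dQ_1(y) + \delta_1 = e^{\epsilon_1+\epsilon_2}\,\mathbb{P}[F(\overline{\mathcal{G}})\in O] + \delta_1$. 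Adding the two contributions gives the claimed bound; the $\delta_1+\delta_2$ term is obtained precisely because the excess mass $\delta_2$ is peeled off \emph{before} invoking $f$'s guarantee.

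I expect the only genuine obstacle to be this last step: a naive argument that replaces $P_2$ by $e^{\epsilon_2}Q_2+\delta_2$ and only then integrates against $P_1$ yields the weaker failure probability $\delta_1 + e^{\epsilon_1}\delta_2$, so the truncation trick is what recovers the additive $\delta_1+\delta_2$. The measurability of the slice functions and the appeal to Tonelli are routine and I would relegate them to a remark. Finally, note the same argument handles the adaptive case (where $g$ may depend on the output of $f$) verbatim, provided the $(\epsilon_2,\delta_2)$-DP of $g$ is assumed to hold for every fixed first-coordinate output, since the bound on $P_2(O_y)$ is applied slicewise anyway.
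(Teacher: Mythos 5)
Your proposal is correct, but there is nothing in the paper to compare it against: Theorem~\ref{Theo:RDP_comp} is stated in the preliminaries as a standard result imported from Dwork and Roth with a citation, and the paper gives no proof of it. Judged on its own merits, your argument is sound and is essentially the standard proof of basic $(\epsilon,\delta)$-composition: the layer-cake identity gives the $[0,1]$-test-function form of the DP inequality, Tonelli reduces the product event to slices, and the pointwise decomposition $P_2(O_y)=\min\bigl(P_2(O_y),e^{\epsilon_2}Q_2(O_y)\bigr)+\bigl(P_2(O_y)-e^{\epsilon_2}Q_2(O_y)\bigr)^{+}$, with the excess mass bounded by $\delta_2$ and peeled off \emph{before} applying $f$'s guarantee, is exactly the step that recovers the additive failure probability $\delta_1+\delta_2$ instead of the weaker $\delta_1+e^{\epsilon_1}\delta_2$ from naive substitution. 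Your closing remarks on the independence assumption and on the adaptive variant (requiring $g$'s guarantee to hold for every fixed first-coordinate output) are also accurate and consistent with how the result is used later in the paper's privacy analysis.
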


A nice property of DP is that the privacy guarantee is not affected by post-processing steps. 

\begin{theorem}[Post-Processing~\cite{dwork2014algorithmic}]\label{PostProc_Theo}
If $f(\mathcal{G})$ satisfies $(\epsilon,\delta)$-DP, then 
for any function $g$ that does not have direct or indirect access to the private database $\mathcal{G}$, $g(f(\mathcal{G}))$ satisfies $(\epsilon,\delta)$-DP.
\end{theorem}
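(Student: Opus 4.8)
The plan is to reduce the general, possibly randomized, post-processing map $g$ to the deterministic case and then pull the output event back through $g$ so that the $(\epsilon,\delta)$-DP guarantee of $f$ can be invoked directly. First I would treat a \emph{deterministic} measurable $g$. Fix any two neighboring graphs $\mathcal{G}$ and $\overline{\mathcal{G}}$ (differing by at most one edge or node) and any measurable $O \subseteq \mathrm{Range}(g \circ f)$, and define the preimage $T = g^{-1}(O) = \{\, y \in \mathrm{Range}(f) : g(y) \in O \,\}$. The key identity is $\mathbb{P}[g(f(\mathcal{G})) \in O] = \mathbb{P}[f(\mathcal{G}) \in T]$, valid because $g(f(\mathcal{G})) \in O$ holds exactly when $f(\mathcal{G}) \in T$. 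Applying Definition~\ref{DP_def} (the $(\epsilon,\delta)$-DP property of $f$) to the set $T$ gives $\mathbb{P}[f(\mathcal{G}) \in T] \le e^{\epsilon}\,\mathbb{P}[f(\overline{\mathcal{G}}) \in T] + \delta$, and rewriting the right-hand side with the same identity yields $\mathbb{P}[g(f(\mathcal{G})) \in O] \le e^{\epsilon}\,\mathbb{P}[g(f(\overline{\mathcal{G}})) \in O] + \delta$, i.e.\ the claim for deterministic $g$.

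Next I would extend to a \emph{randomized} $g$ by the standard ``mixture of deterministic maps'' argument. Represent $g$ through an auxiliary source of randomness $R$ with a fixed distribution that does not depend on the database, writing $g(y) = h(y, R)$ for a deterministic $h$; this representation is exactly the hypothesis that $g$ has no direct or indirect access to $\mathcal{G}$. Conditioning on $R = \rho$, the map $h(\cdot, \rho)$ is deterministic, so the previous paragraph gives $\mathbb{P}[h(f(\mathcal{G}), \rho) \in O] \le e^{\epsilon}\,\mathbb{P}[h(f(\overline{\mathcal{G}}), \rho) \in O] + \delta$ for every $\rho$. Taking expectation over $R$ (legitimate since $R$ is independent of $\mathcal{G}$ and of $f$'s internal coins) and using linearity, $\mathbb{P}[g(f(\mathcal{G})) \in O] = \mathbb{E}_{R}\big[\mathbb{P}[h(f(\mathcal{G}), R) \in O \mid R]\big] \le e^{\epsilon}\,\mathbb{E}_{R}\big[\mathbb{P}[h(f(\overline{\mathcal{G}}), R) \in O \mid R]\big] + \delta = e^{\epsilon}\,\mathbb{P}[g(f(\overline{\mathcal{G}})) \in O] + \delta$, which finishes the proof. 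The same chain of inequalities holds with the roles of $\mathcal{G}$ and $\overline{\mathcal{G}}$ interchanged, so no separate treatment of the two directions is required.

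The step I expect to be the main obstacle is technical rather than conceptual: making the preimage argument rigorous requires $g$ to be measurable so that $T = g^{-1}(O)$ is a legitimate event to which the DP guarantee of $f$ applies, and the randomized case needs a clean product-measure model of the external randomness $R$ so that the Fubini-type interchange of $\mathbb{E}_{R}$ with the probability over $f$'s coins is justified. Once this measure-theoretic bookkeeping is in place, the inequality itself is an immediate one-line consequence of Definition~\ref{DP_def}, and the result also composes with Theorem~\ref{Theo:RDP_comp} so that any downstream post-processing of the released embeddings in AdvSGM inherits the same privacy level.
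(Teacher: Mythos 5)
Your proof is correct and follows essentially the same route as the standard argument in the cited source~\cite{dwork2014algorithmic} (the paper itself states this theorem without proof): the preimage identity $\mathbb{P}[g(f(\mathcal{G}))\in O]=\mathbb{P}[f(\mathcal{G})\in g^{-1}(O)]$ for deterministic $g$, followed by the mixture-of-deterministic-maps extension to randomized $g$ whose randomness is independent of the database. Your measurability and Fubini-type caveats are the right technical points to flag, but they are routine and do not constitute a gap.
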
 

\textbf{$(\alpha,\epsilon)$-RDP.} 
In this paper, we adopt an alternative definition of DP known as Rényi Differential Privacy (RDP)~\cite{mironov2017renyi}, which enables stronger results for sequential composition.

\begin{definition}[RDP~\cite{mironov2017renyi}]
Given $\alpha>1$ and $\epsilon>0$, a randomized algorithm $\mathcal{A}$ satisfies $(\alpha, \epsilon)$-RDP if for every adjacent datasets $\mathcal{G}$ and $\overline{\mathcal{G}}$, we have 
$D_\alpha\left(\mathcal{A}(\mathcal{G}) \| \mathcal{A}\left(\overline{\mathcal{G}}\right)\right) \leq \epsilon$,
where $D_\alpha(P \| Q)$ is the Rényi divergence of order $\alpha$ between probability distributions $P$ and $Q$ defined as 
$D_\alpha(P \| Q)=\frac{1}{\alpha-1} \log \mathbb{E}_{x \sim Q}\left[\frac{P(x)}{Q(x)}\right]^\alpha$.
\end{definition}

A key property of RDP is that it can be converted to standard $(\epsilon, \delta)$-DP using Proposition 3 from \cite{mironov2017renyi}, as outlined below.

\begin{theorem}[From RDP to ($\epsilon, \delta)$-DP~\cite{mironov2017renyi}]\label{Theo:RDP_to_DP}
If $\mathcal{A}$ is an $(\alpha, \epsilon)$-RDP algorithm, then it also satisfies $(\epsilon+\log (1 / \delta) / \alpha-1, \delta)$-DP for any $\delta \in(0,1)$.
\end{theorem}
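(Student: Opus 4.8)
The plan is to reduce the claim to a tail bound on the privacy-loss random variable and then split an arbitrary output event into a ``typical'' part and a ``rare'' part. Fix neighboring graphs $\mathcal{G}$ and $\overline{\mathcal{G}}$, write $P = \mathcal{A}(\mathcal{G})$ and $Q = \mathcal{A}(\overline{\mathcal{G}})$, set the target budget $\epsilon' = \epsilon + \log(1/\delta)/(\alpha-1)$, and define the privacy loss $\ell(x) = \log \tfrac{dP}{dQ}(x)$ (densities in the discrete case, Radon--Nikodym derivatives in general). One first notes $P \ll Q$: otherwise $D_\alpha(P\|Q) = \infty$, contradicting the $(\alpha,\epsilon)$-RDP hypothesis, so $\ell$ and the expectations below are well defined.

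First I would rewrite the RDP hypothesis as a moment bound. By definition of $D_\alpha$, the inequality $D_\alpha(P\|Q)\le \epsilon$ is equivalent to $\mathbb{E}_{x\sim Q}\big[(dP/dQ)^\alpha\big]\le e^{(\alpha-1)\epsilon}$, and changing the base measure from $Q$ to $P$ (multiplying and dividing by $dP/dQ$) turns this into $\mathbb{E}_{x\sim P}\big[e^{(\alpha-1)\ell(x)}\big]\le e^{(\alpha-1)\epsilon}$. Since $\alpha>1$, the map $t\mapsto e^{(\alpha-1)t}$ is increasing, so Markov's inequality applied to the nonnegative variable $e^{(\alpha-1)\ell(x)}$ at level $e^{(\alpha-1)\epsilon'}$ gives $\mathbb{P}_{x\sim P}[\ell(x)>\epsilon'] \le e^{-(\alpha-1)\epsilon'}\,\mathbb{E}_{x\sim P}[e^{(\alpha-1)\ell(x)}] \le e^{(\alpha-1)(\epsilon-\epsilon')}$. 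Plugging in $\epsilon'-\epsilon = \log(1/\delta)/(\alpha-1)$ collapses the exponent to $\log\delta$, yielding the clean tail bound $\mathbb{P}_{x\sim P}[\ell(x)>\epsilon'] \le \delta$.

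Next I would bound $P(O)$ for an arbitrary measurable $O\subseteq \mathrm{Range}(\mathcal{A})$ by conditioning on $\{\ell\le\epsilon'\}$. Write $P(O) = P\big(O\cap\{\ell\le\epsilon'\}\big) + P\big(O\cap\{\ell>\epsilon'\}\big)$; the second term is at most $\mathbb{P}_{x\sim P}[\ell(x)>\epsilon']\le\delta$ by the previous step. On the event $\{\ell\le\epsilon'\}$ one has $dP/dQ\le e^{\epsilon'}$ pointwise, hence $P\big(O\cap\{\ell\le\epsilon'\}\big) \le e^{\epsilon'} Q\big(O\cap\{\ell\le\epsilon'\}\big) \le e^{\epsilon'} Q(O)$. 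Combining, $P(O)\le e^{\epsilon'}Q(O)+\delta$, which is precisely the $(\epsilon',\delta)$-DP condition of Definition~\ref{DP_def}; since the neighboring pair was arbitrary, the theorem follows.

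The algebraic rewriting and the event split are routine; the one place that needs care is the direction of the inequalities in the Markov step, which hinges on $\alpha>1$ (for $\alpha<1$ the same template would require flipping the bound), together with the preliminary $P\ll Q$ so everything is finite. I expect the main conceptual obstacle to be recognizing that the right quantity to threshold is the privacy loss \emph{under $P$} with exponent $\alpha-1$ — this is exactly what makes the moment $\mathbb{E}_{x\sim P}[e^{(\alpha-1)\ell}]$ delivered by the RDP hypothesis match a Markov bound at level $\epsilon'$ and produce exactly the additive term $\log(1/\delta)/(\alpha-1)$.
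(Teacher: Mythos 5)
Your proof is correct. Note that the paper itself gives no argument for this statement --- it is quoted as Proposition~3 of Mironov's RDP paper --- so the only meaningful comparison is with Mironov's original proof, and there your route is genuinely different: Mironov restricts the R\'enyi divergence to the binary partition $\{S, S^c\}$ (a data-processing step) to obtain the ``probability preservation'' bound $P(S) \le \left(e^{\epsilon} Q(S)\right)^{(\alpha-1)/\alpha}$ and then finishes with a case split on whether this quantity exceeds $\delta$, whereas you convert the RDP hypothesis into the moment bound $\mathbb{E}_{x\sim P}\big[e^{(\alpha-1)\ell(x)}\big]\le e^{(\alpha-1)\epsilon}$, apply Markov to get the tail bound $\mathbb{P}_{P}[\ell>\epsilon']\le\delta$, and split an arbitrary event into its typical and rare parts. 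Both arguments deliver exactly the additive term $\log(1/\delta)/(\alpha-1)$; your tail-bound version has the advantage of making the privacy-loss random variable explicit (the same template underlies the moments-accountant and concentrated-DP conversions), while Mironov's is slightly shorter and needs no change of measure or absolute-continuity remark --- a point you do handle correctly by observing that $P\not\ll Q$ would force $D_\alpha(P\|Q)=\infty$. One cosmetic remark: the theorem as printed reads $\epsilon+\log(1/\delta)/\alpha-1$, a parenthesization typo for $\epsilon+\log(1/\delta)/(\alpha-1)$, and you proved the intended (correct) statement.
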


\emph{Gaussian mechanism.} 
Consider a function $f$ that maps a graph $\mathcal{G}$ to an $r$-dimensional output in $\mathbb{R}^r$. To ensure differential privacy for the function $f$, it is common to add random noise to its output. The amount of noise added is determined by the sensitivity of $f$, which is defined as $\Delta_f=\max_{\mathcal{G},\overline{\mathcal{G}}}\|f(\mathcal{G})-f(\overline{\mathcal{G}})\|_2$, where $\mathcal{G}$ and $\overline{\mathcal{G}}$ are neighboring graphs.
A widely used method to achieve RDP is the Gaussian mechanism, which adds Gaussian noise to the output of an algorithm to protect privacy. By adding Gaussian noise with variance $\sigma^2$ to the function $f$, the mechanism can be defined as $\mathcal{A}(\mathcal{G})=f(\mathcal{G})+\mathcal{N}(\sigma^2\mathbf{I})$, where $\mathcal{N}(\sigma^2\mathbf{I})$ represents noise drawn from a Gaussian distribution with variance $\sigma^2$. This results in an $(\alpha,\epsilon)$-RDP algorithm for all $\alpha > 1$, where the privacy parameter $\epsilon$ is given by $\epsilon=\frac{\alpha \Delta_f^2}{2\sigma^2}$.

It is important to note that the concept of sensitivity makes satisfying node-level differential privacy challenging, as changing a single node could potentially remove $|V|-1$ edges in the worst case, where $|V|$ represents the number of nodes. Consequently, a large amount of noise must be added to ensure privacy protection.

\textbf{Amplification by Subsampling.} 
Subsampling introduces a non-zero probability of an added or modified sample not to be processed by the randomized algorithm. Random sampling will enhance privacy protection and reduce privacy loss~\cite{wang2019subsampled,zhu2019poission,mironov2019r}. In this paper, we focus on the ``subsampling without replacement" setup, which adheres to the following privacy amplification theorem for $(\epsilon,\delta)$-DP.

\begin{theorem}[RDP for Subsampled Mechanisms~\cite{wang2019subsampled}]\label{theo:subsample}
Given a dataset of $n$ points drawn from a domain $\mathcal{X}$ and a mechanism $\mathcal{A}$ that accepts inputs from $\mathcal{X}^m$ for $m \leq n$, we consider the randomized algorithm $\mathcal{A}$ for subsampling, which is defined as follows: 1) sample $m$ data points without replacement from the dataset, where the sampling parameter is $\gamma = m / n$, and 2) apply $\mathcal{A}$ to the subsampled dataset. For all integers $\alpha \geq 2$, if $\mathcal{A}$ satisfies $(\alpha, \epsilon(\alpha))$-RDP, then the subsampled mechanism $\mathcal{A} \circ \rm{subsample}$ satisfies $\left(\alpha, \epsilon^{\prime}(\alpha)\right)$ RDP in which
\begin{equation*}\label{eq:RDP_subsamp}
\begin{split}
\epsilon^{\prime}(\alpha) \leq \frac{1}{\alpha-1} \log \big(1+\gamma^2\big(\begin{array}{l}
\alpha \\
2
\end{array}\big) \min \big\{4\big(e^{\epsilon(2)}-1\big), \\
e^{\epsilon(2)} \min \big\{2,\big(e^{\epsilon(\infty)}-1\big)^2\big\}\big\} \\
+\sum_{j=3}^\alpha \gamma^j\big(\begin{array}{l}
\alpha \\
j
\end{array}\big) e^{(j-1) \epsilon(j)} \min \big\{2,\big(e^{\epsilon(\infty)}-1\big)^j\big\}\big).
\end{split}
\end{equation*}
\end{theorem}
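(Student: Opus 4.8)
\emph{Proof proposal.} This amplification bound is established in~\cite{wang2019subsampled}; the plan for reproving it runs as follows. The target is to bound, for every pair of neighbouring datasets and every integer $\alpha\ge 2$, the quantity $\exp\!\big((\alpha-1)\epsilon'(\alpha)\big)=\mathbb{E}_{\theta\sim P}\big[(Q(\theta)/P(\theta))^\alpha\big]$, where $P$ and $Q$ denote $\mathcal{A}\circ\mathrm{subsample}$ applied to the two datasets. First I would decompose the subsampled output law according to whether the random index set hits the record in which the two datasets differ: with probability $1-\gamma$ it misses, and conditioned on this the two subsampled datasets coincide, so the conditional output laws are identical; with probability $\gamma$ it hits. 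This gives $Q=(1-\gamma)P_0+\gamma P_1$ with $P_0=P$ in the add/remove relation (modulo the usual rescaling of $\gamma$ induced by the change in dataset size), where $P_1$ is the law of $\mathcal{A}$ on a subsample that contains the distinguished record. I would then couple the two subsampling processes — draw a size-$m$ subsample and swap one uniformly chosen element for the distinguished record — to exhibit $(P_0,P_1)$ as a mixture, over this coupling, of output pairs of $\mathcal{A}$ on datasets differing in a \emph{single} record; by the joint convexity of the Rényi divergence and of $\chi^2$, and the weighted-mediant inequality for $\sup P_1/P_0$, the pair $(P_0,P_1)$ then inherits both the $(\alpha,\epsilon(\alpha))$-RDP bound (for every order) and the pure bound $\sup P_1/P_0\le e^{\epsilon(\infty)}$ enjoyed by $\mathcal{A}$ on one-element-differing inputs.

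Next, writing $u=P_1/P-1$ so that $Q/P=1+\gamma u$ and $\mathbb{E}_P[u]=0$, I would expand $\mathbb{E}_P[(Q/P)^\alpha]=1+\sum_{j=2}^{\alpha}\binom{\alpha}{j}\gamma^j\,\mathbb{E}_P[u^j]$ by the binomial theorem — this is the one place the restriction to \emph{integer} $\alpha$ is essential — with the $j=1$ term vanishing by the zero-mean property. Each moment $\mathbb{E}_P[u^j]$ is then bounded in two complementary ways: (i) pointwise, using $|u|\le e^{\epsilon(\infty)}-1$ and splitting on the sign of $u$, which yields the truncation factor $\min\{2,(e^{\epsilon(\infty)}-1)^j\}$; and (ii) via the order-$2$ (resp.\ order-$j$) RDP guarantee, which controls $\mathbb{E}_P[u^2]=\chi^2(P_1\|P)\le e^{\epsilon(2)}-1$ and the higher moments through the weight $e^{(j-1)\epsilon(j)}$. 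Taking the termwise minimum of the two estimates, together with the bookkeeping for both divergence directions and for the ``replace-one'' relation, should reproduce exactly the coefficient $\min\{4(e^{\epsilon(2)}-1),\,e^{\epsilon(2)}\min\{2,(e^{\epsilon(\infty)}-1)^2\}\}$ at $j=2$ and $e^{(j-1)\epsilon(j)}\min\{2,(e^{\epsilon(\infty)}-1)^j\}$ for $j\ge 3$; applying $\tfrac{1}{\alpha-1}\log(\cdot)$ then yields the stated $\epsilon'(\alpha)$.

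The hard part will be the \emph{unified, two-sided} analysis. The direction $D_\alpha\big(\mathcal{A}\circ\mathrm{subsample}(\text{larger})\,\|\,\mathcal{A}\circ\mathrm{subsample}(\text{smaller})\big)$ is comparatively clean, but the reverse direction and the ``replace-one'' neighbouring relation force a comparison against a \emph{ternary} configuration ($P$, $P_1$, and a third ``opposite'' perturbation), and it is this triangle-inequality-style bookkeeping that produces the factor $4$ and the nested minima; getting those constants sharp, rather than merely finite, is the delicate point. A secondary difficulty is that sampling \emph{without} replacement rules out the product-measure shortcut available for Poisson subsampling, so the coupling above and the attendant convexity arguments must be set up by hand; by contrast the $j\ge 3$ tail is benign, since each such term carries a factor $\gamma^j$ and only crude termwise bounds are needed there.
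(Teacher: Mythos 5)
There is no in-paper proof to compare against here: the paper states this result purely as an imported preliminary, quoting it from \cite{wang2019subsampled} (it is used later only as a black box in the privacy analysis of Algorithm~3), so the only meaningful benchmark is the original proof in that reference. Measured against that, your outline does capture the actual strategy: the mixture decomposition of the subsampled output law according to whether the distinguished record is hit, the binomial expansion of $\mathbb{E}_P[(1+\gamma u)^\alpha]$ which is exactly where integrality of $\alpha$ is used, the vanishing of the linear term since $\mathbb{E}_P[P_1/P-1]=0$, the identity $\mathbb{E}_P[u^2]=e^{D_2(P_1\|P)}-1$ for the quadratic term, and the termwise $\min\{2,(e^{\epsilon(\infty)}-1)^j\}$ truncation for the tail. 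You also correctly locate where the difficulty lies: the replace-one relation and the two divergence directions force the comparison against a third perturbed distribution, and it is this bookkeeping that produces the factor $4$ and the nested minima at $j=2$.

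That said, as submitted this is a plan rather than a proof: the step you flag as ``the hard part'' is precisely the content of the original argument. In \cite{wang2019subsampled} it is handled by introducing a ternary divergence notion (a distribution-dependent $|\chi|^j$-type quantity over the triple of conditional output laws) and proving separate lemmas that bound these ternary moments by $e^{(j-1)\epsilon(j)}$ and by the $\epsilon(\infty)$ truncation simultaneously; without carrying out that machinery, your sketch does not actually establish the stated constants (in particular the $4(e^{\epsilon(2)}-1)$ branch), only that some finite bound of this shape should exist. Also note a small mismatch with the theorem as stated in this paper: the neighboring relation here is replace-one (bounded DP at node level), so the clean add/remove case you describe first is not the one ultimately needed, and the coupling-plus-joint-convexity reduction you invoke must be set up for the swap relation from the start. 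If you intend this as a self-contained reproof rather than a pointer to the reference, those lemmas are what would need to be written out.
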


\subsection{DPSGD}\label{sec:dpsgd}
One common technique for achieving differentially private training is the combination of noisy Stochastic Gradient Descent (SGD) and advanced composition theorems such as Moments Accountant (MA)~\cite{abadi2016deep}. 
This combination, known as DPSGD, has been widely studied in recent years for publishing low-dimensional node vectors, as \emph{the advanced composition theorems can effectively manage the problem of excessive splitting of the privacy budget during optimization}.
In DPSGD, the gradient $\mathbf{g}\left(x_i\right)$ is computed for each example $x_i$ in a batch with size $B$ of random examples. The $\ell_2$ norm of each gradient is then clipped using a threshold $C$ to control the sensitivity of $\mathbf{g}\left(x_i\right)$. The clipped gradients $\mathbf{g}\left(x_i\right)$ are summed and combined with Gaussian noise $\mathcal{N}\big(C^2 \sigma^2\mathbf{I}\big)$ to ensure privacy. The average of the resulting noisy accumulated gradient $\tilde{\mathbf{g}}$ is then used to update the model parameters. The expression for $\tilde{\mathbf{g}}$ is given by:
\begin{equation}\label{dpsgd_eq1}
  \tilde{\mathbf{g}} = \frac{1}{B}\big(\sum_{i=1}^{B} \mathbf{clip}(\mathbf{g}\left(x_i\right), C)+\mathcal{N}\big(C^2 \sigma^2 \mathbf{I}\big)\big),
\end{equation}
where $\mathbf{clip}(\cdot)$ is a clipping function, and specifically, $\mathbf{clip}(\mathbf{g}\left(x_i\right), C)=\mathbf{g}\left(x_i\right)/\max(1,\frac{\|\mathbf{g}\left(x_i\right)\|_2}{C})$. \emph{For the convenience of presentation, we will replace $\mathbf{clip}(\mathbf{g}\left(x_i\right), C)$ with $\mathbf{clip}(\mathbf{g}\left(x_i\right))$ in the following sections.}

\section{Problem Definition and A First-Cut Solution}\label{sec:ProbDef}
\subsection{Problem Definition}\label{sec:problemDef}
The advantages of graph embeddings, including low dimensionality, information richness, and task independence, have led to a growing willingness among data owners to publish them for data exploration and analysis, rather than disclosing the original graph data. This work specifically concentrates on a differentially private graph embedding generation. Instead of releasing a sanitized version of the original embeddings, we release a privacy-preserving SGM that is trained on the original data while maintaining privacy. Once equipped with this privacy-preserving model, the analyst can generate synthetic embeddings for the intended analysis tasks.

\textbf{Threat Model.} In this work, we consider a white-box attack scenario~\cite{he2019model}. In the threat model, we do not predefine the attacker's computational capabilities, because different from other cryptographic constructs that are based on computational hardness of certain problems, DP protects confidentiality by adding noise. This construct does not depend on the attacker's computational capabilities. Regarding prior knowledge, we assume that the attacker has complete knowledge of the model's architecture and parameters, and can access the published model and all training data, except for the target sample to infer. However, the attacker has no access to the training process, which inevitably involves accessing the target sample.

\textbf{Privacy Model.}
As stated in Sections~\ref{sec:DP_properties} and $\ref{sec:dpsgd}$, DP ensures that although attackers can have all information from the training dataset except one data sample, they still cannot get this data sample after launching attack. 
The post-processing property (Theorem~\ref{PostProc_Theo}) allows us to move the burden of differential privacy to the discriminator, with the generator's differential privacy being guaranteed by the theorem.
Formally, we define differentially private adversarial skip-gram as follows.

\begin{definition}[Adversarial Skip-gram under Bounded DP\footnote{Since our goal is to generate privacy-preserving node embeddings where the number of embeddings matches the number of nodes in the original graph, we define the node-level privacy-preserving graph embedding under bounded DP.}]\label{def:prob_privdef}
Let $\Theta=[\mathbf{W}_{in}, \mathbf{W}_{out}]$ be the set of parameters to be optimized in the skip-gram module (as shown in Eq.\,(\ref{eq:NoveSGM_Eq})), where $\mathbf{v}_i\in\mathbf{W}_{in}$ and $\mathbf{v}_j\in\mathbf{W}_{out}$.
The adversarial skip-gram model $L^D$ satisfies node-level $(\epsilon, \delta)$-DP if two neighboring graphs $\mathcal{G}$ and $\overline{\mathcal{G}}$ differ in only one node and its corresponding edges, and for all possible $\Theta_S\subseteq{Range(L^D)}$, we have
\begin{equation*}\label{Theo:probDef_privModel}
  \mathbb{P}(L^D(\mathcal{G}) \in \Theta_S) \leq \exp (\epsilon) \times \mathbb{P}\left(L^D\left(\overline{\mathcal{G}}\right) \in \Theta_S\right) + \delta,
\end{equation*}
where $\Theta_S$ denotes the set comprising all possible values of $\Theta$.
\end{definition}

Leveraging the robustness to post-processing, we can immediately conclude that the $(\epsilon, \delta)$-private graph embedding generation is robust to graph downstream tasks, as formalized in the following theorem:
\begin{theorem}\label{Theo:probDef_PostProcess}
Let $L^D$ be a private graph embedding generation model that satisfies node-level $(\epsilon,\delta)$-DP, and let $f$ be any graph downstream task that takes the privacy-preserving graph embedding matrix (i.e., $\mathbf{W}_{in}$ or $\mathbf{W}_{out}$) as input. Then, the composition $f \circ {L^D}$ also satisfies node-level $(\epsilon,\delta)$-DP.
\end{theorem}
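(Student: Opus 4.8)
The plan is to reduce the claim directly to the post-processing property of differential privacy (Theorem~\ref{PostProc_Theo}). First I would record what the hypothesis gives us: by Definition~\ref{def:prob_privdef}, the mechanism $L^D$ takes the graph $\mathcal{G}$ as input and releases the optimized parameter set $\Theta=[\mathbf{W}_{in},\mathbf{W}_{out}]$, and by assumption the law of this release satisfies node-level $(\epsilon,\delta)$-DP: for any two graphs $\mathcal{G},\overline{\mathcal{G}}$ differing in a single node and its incident edges, and any measurable $\Theta_S\subseteq Range(L^D)$, $\mathbb{P}(L^D(\mathcal{G})\in\Theta_S)\le e^{\epsilon}\,\mathbb{P}(L^D(\overline{\mathcal{G}})\in\Theta_S)+\delta$. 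Note that extracting one embedding matrix ($\mathbf{W}_{in}$ or $\mathbf{W}_{out}$) from $\Theta$ is itself a deterministic, data-independent map, hence a post-processing step; so it suffices to treat $f$ as acting on the full released $\Theta$.

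Next I would observe that a downstream task $f$ depends on the private database only through the released embedding matrix and does not query $\mathcal{G}$ in any other way; in the terminology of Theorem~\ref{PostProc_Theo}, $f$ has no direct or indirect access to $\mathcal{G}$. Therefore $f\circ L^D$ is precisely the post-processing of the $(\epsilon,\delta)$-DP mechanism $L^D$ by the data-independent map $f$, and Theorem~\ref{PostProc_Theo} immediately yields that $f\circ L^D$ satisfies node-level $(\epsilon,\delta)$-DP with respect to the same neighboring relation. Concretely, for any output set $O\subseteq Range(f\circ L^D)$, let $\Theta_S=f^{-1}(O)=\{\Theta:f(\Theta)\in O\}$ (if $f$ is randomized, replace this indicator by the transition kernel of $f$ and integrate it against the two output distributions of $L^D$). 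Since $\Theta_S$ is a fixed subset of $Range(L^D)$ that does not depend on whether the input is $\mathcal{G}$ or $\overline{\mathcal{G}}$, applying the DP guarantee of $L^D$ to $\Theta_S$ gives $\mathbb{P}((f\circ L^D)(\mathcal{G})\in O)\le e^{\epsilon}\,\mathbb{P}((f\circ L^D)(\overline{\mathcal{G}})\in O)+\delta$, which is the desired conclusion.

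There is essentially no technical obstacle here; the only things that need care are bookkeeping. One is to confirm that the notion of ``neighboring'' used in the hypothesis and in the conclusion coincide --- both are node-level adjacency (two graphs differing in one node together with its edges) --- so no change of privacy granularity occurs along the way. The other is to make sure the argument also covers randomized downstream tasks, such as a $k$-means clustering routine or a link-prediction classifier with internal randomness: this is exactly what the phrase ``no direct or indirect access to $\mathcal{G}$'' in Theorem~\ref{PostProc_Theo} encodes, since post-processing is valid for any map whose randomness is drawn independently of the private input. With these observations in place, the theorem follows in one line from Theorem~\ref{PostProc_Theo}.
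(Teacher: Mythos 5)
Your proposal is correct and follows exactly the route the paper takes: the paper derives this theorem as an immediate consequence of the post-processing property (Theorem~\ref{PostProc_Theo}), which is precisely your reduction. Your additional bookkeeping (matching the node-level neighboring relation and covering randomized downstream tasks) is sound and only makes explicit what the paper leaves implicit.
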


\subsection{DP-ASGM: A First-Cut Solution}\label{appdix:DPSGM}
As stated in Section~\ref{sec:dpsgd}, DPSGD with the advanced composition mechanism can manage the issue of excessive splitting of the privacy budget during optimization.
One straightforward approach, called as DP-ASGM, to achieve differentially private skip-gram with adversarial training is to perturb the sum of clipped gradients for discriminator. Using $\mathbf{v}$ as a general notation representing either $\mathbf{v}_i$ or $\mathbf{v}_j$ in Eq.\,(\ref{eq:dis_obj}), the noisy gradient $\widetilde{\nabla}_{\mathbf{v}}L^D$ is expressed as follows:
\begin{equation}\label{eq:AdvSGMDPSGD_grad}
  \widetilde{\nabla}_{\mathbf{v}}L^D
  = \frac{1}{B}\big(\sum_{(i,j) \in E_B} \mathbf{clip}\big(\frac{\partial{L^D}}{\partial{\mathbf{v}}}\big)+\mathcal{N}\left(B^2C^2 \sigma^2\mathbf{I}\right)\big),
\end{equation}
in which $E_B$ denotes a batch sample set, and the sensitivity of $\sum_{(i,j) \in E_B}\mathbf{clip}\big(\frac{\partial{L^D}}{\partial{\mathbf{v}}}\big)$ may be up to $BC$ under Definition~\ref{def:prob_privdef}. The main reason is that clipping in DPSGD is designed to minimize the amount of noise added to gradients during the backward pass of each data point. While this method naturally suits structured data with well-defined individual gradients, it cannot be seamlessly extended to deep graph learning. In graph learning, individual examples no longer compute their gradients independently because changing a single node in the graph may affect the gradients of all nodes in a batch. Therefore, the sensitivity of the gradient sum in Eq.\,(\ref{eq:AdvSGMDPSGD_grad}) is proportional to the batch size.

\textbf{Limitation.}
However, the approach described above leads to poor utility. This is primarily due to the large sensitivity, which introduces significant noise and hampers the effectiveness of the optimization process.

\begin{figure*}[htb]
  \centering
  \includegraphics[width = 6.5in]{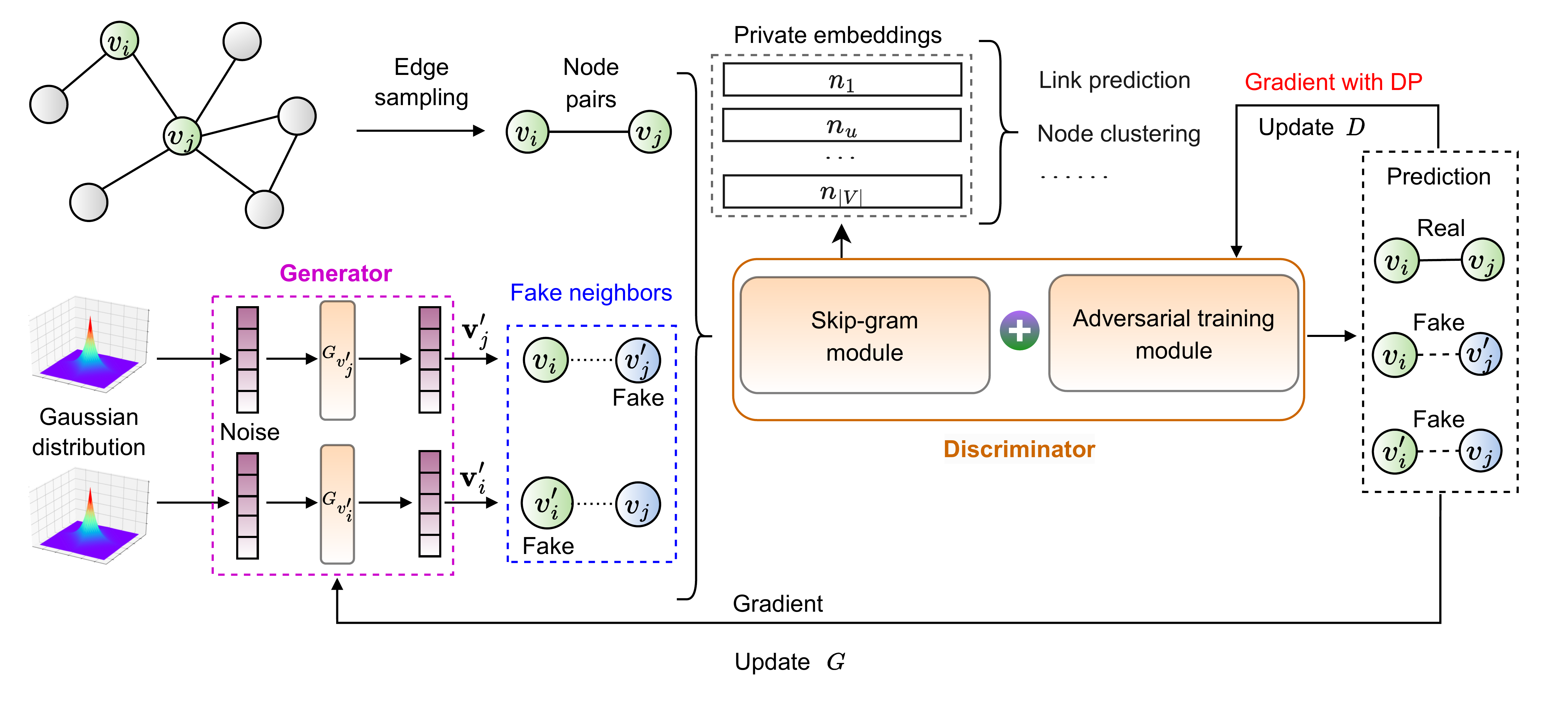}
  \caption{Architecture of AdvSGM. The discriminator can be divided into two modules: skip-gram (graph structure preservation module) for learning the features of the input data, and adversarial training module for improving the performance of skip-gram. Two generators are employed to generate fake neighbors for the real node pair $(v_i,v_j)$. These fake node pairs are designed to deceive the discriminator with a high probability, while the discriminator is trained to distinguish between real and fake node pairs.}
  \label{PrivSG_fig}
\end{figure*}

\section{Our Proposal: AdvSGM}\label{sec:AdvSGM}
To resist the high sensitivity of DP-ASGM, we present AdvSGM, a differentially private skip-gram model that leverages adversarial training to privatize skip-gram while improving its utility. First, we provide an overview of the approach. Next, we describe how we introduce optimizable noise terms and achieve gradient perturbation by adjusting the weights between modules. Finally, we present the complete training algorithm.

\subsection{Overview}
AdvSGM aims to privately learn the embedding vectors for each node in an undirected graph. Fig.\,\ref{PrivSG_fig} illustrates the proposed framework, which consists of two main components: the generator and the discriminator. Given a node pair $(v_i,v_j)$, two generators are employed to generate fake neighbor nodes $v_j^\prime$ and $v_i^\prime$ from two Gaussian distributions. And one discriminator, including skip-gram and adversarial training module, is assigned to distinguish whether the neighborhoods of the generated nodes ($v_j^\prime$ and $v_i^\prime$) are real or fake. Specifically, the adversarial training module is to ensure that discriminator's update satisfies DP, without introducing additional noise.
Following this, the generator will naturally obey DP due to the post-processing property of DP (Theorem~\ref{PostProc_Theo}).
Through the minimax game between the generators and the discriminator, AdvSGM can privately learn more accurate and robust node embeddings.

\textbf{Challenge.}
Based on Eq.\,(\ref{eq:dis_obj}), we take $\frac{\partial{L}^D}{\partial{\mathbf{v}_i}}=\frac{\partial{L}_{sgm}^D}{\partial{\mathbf{v}_i}} + \lambda\frac{\partial{L}_{adv}^D}{\partial{\mathbf{v}_i}}$ as an example to illustrate the challenge in achieving a differentially private discriminator through adversarial training.
Our \emph{target gradient} is to decompose $\frac{\partial{L}^D}{\partial{\mathbf{v}_i}}$ into the form described in Eq.\,(\ref{eq:target_gra}),
\begin{align}
   &\frac{\partial{L}^D}{\partial{\mathbf{v}_i}}
   = \frac{\partial{L}_{sgm}^D}{\partial{\mathbf{v}_i}} + \ell + \mathcal{N}(\sigma^2\mathbf{I}), \label{eq:target_gra}
\end{align}
where we suppose $\lambda\frac{\partial{L}_{adv}^D}{\partial{\mathbf{v}_i}}=\ell + \mathcal{N}(\sigma^2\mathbf{I})$.

To achieve \emph{private gradient} update without additional noise injection, we can clip the first two terms in Eq.\,(\ref{eq:target_gra}) as follows:
\begin{align}
   &\frac{\partial{L}^D}{\partial{\mathbf{v}_i}}
   = \mathbf{clip}(\frac{\partial{L}_{sgm}^D}{\partial{\mathbf{v}_i}} + \ell) + \mathcal{N}(C^2\sigma^2\mathbf{I}) \label{eq:private_gra}.
\end{align}

However, the \emph{real gradient} of $\frac{\partial{L}^D}{\partial{\mathbf{v}_i}}$ is as follows:
\begin{align}
   \frac{\partial{L}^D}{\partial{\mathbf{v}_i}}
   =& \frac{\partial{L}_{sgm}^D}{\partial{\mathbf{v}_i}} + \lambda \frac{\partial{L}_{adv}^D}{\partial{\mathbf{v}_i}} \\
   =& \frac{\partial{L}_{sgm}^D}{\partial{\mathbf{v}_i}} + \lambda \frac{\partial{\left(-\log(1 - F(\mathbf{v}_i\cdot\mathbf{v}_j^\prime)\right)}}{\partial{\mathbf{v}_i}} \\ 
   \stackrel{(i)}{=}& \frac{\partial{L}_{sgm}^D}{\partial{\mathbf{v}_i}} +
   \lambda F(\mathbf{v}_i\cdot\mathbf{v}_j^\prime)\cdot\mathbf{v}_j^\prime\label{eq:real_gra},
\end{align}
where the step $(i)$ holds because $F(\cdot)$ is a \emph{Sigmoid} function, and its derivative is $\frac{\partial{F}(x)}{\partial{x}}=F(x)(1-F(x))$.

Clearly, this real gradient cannot be expressed in the form given in Eq.\,(\ref{eq:target_gra}). Therefore, the main challenge is to develop an adversarial loss function that can produce the desired gradient while maintaining high data utility.

\textbf{Our Solution.}
We address this challenge through two steps. 
First, we design a novel adversarial training module by introducing optimizable noise terms, $\mathcal{N}_{D,1}(C^2\sigma^2\mathbf{I})\cdot\mathbf{v}_i$ and $\mathcal{N}_{D,2}(C^2\sigma^2\mathbf{I})\cdot\mathbf{v}_j$, to ensure that
    \begin{equation}\label{eq:target_grad_form}
        \frac{\partial{L}^D}{\partial{\mathbf{v}_i}}
        = (\frac{\partial{L}_{sgm}^D}{\partial{\mathbf{v}_i}} + \ell_1) + \lambda\ell_2\mathcal{N}_{D,1}(C^2\sigma^2\mathbf{I}).
    \end{equation}  
Second, we fine-tune the weights that control relative importance between modules to achieve the desired target gradient form, and more importantly, we show that this configuration is reasonable. 

\subsection{Introducing Optimizable Noise Terms}
To address the second objective while ensuring the first objective is not compromised, we incorporate two optimizable noise terms that correspond to the parameters of a skip-gram, namely $\mathcal{N}_{D,1}(C^2\sigma^2\mathbf{I})\cdot\mathbf{v}_i$ and $\mathcal{N}_{D,2}(C^2\sigma^2\mathbf{I})\cdot\mathbf{v}_j$, into the input of the \emph{Sigmoid} functions. The loss function can be derived as follows:
\begin{align}
  \widetilde{L}_{adv}^D
&=\underbrace{\mathbb{E}_{(i,j) \in E}(-\log(1 - F(\mathbf{v}_i\cdot\mathbf{v}_j^\prime+\mathcal{N}_{D,1}(C^2\sigma^2\mathbf{I})\cdot\mathbf{v}_i)))}_{\widetilde{L}_{adv1}^D} \nonumber\\
&+ \underbrace{\mathbb{E}_{(i,j) \in E}(-\log(1 - F(\mathbf{v}_i^\prime\cdot\mathbf{v}_j+\mathcal{N}_{D,2}(C^2\sigma^2\mathbf{I})\cdot\mathbf{v}_j)))}_{\widetilde{L}_{adv2}^D}. \label{eq:ATM_Eq2}
\end{align}

Based on the Equation, we can get
\begin{align}
        &\frac{\partial{\widetilde{L}_{adv}^D}}{\partial{\mathbf{v}_i}}=\frac{\partial{\widetilde{L}_{adv1}^D}}{\partial{\mathbf{v}_i}} \label{eq:jjjjjj} \\
        =&F(\mathbf{v}_i\cdot\mathbf{v}_j^\prime+\mathcal{N}_{D,1}(C^2\sigma^2\mathbf{I})\cdot\mathbf{v}_i)\cdot
        (\mathbf{v}_j^\prime+\mathcal{N}_{D,1}(C^2\sigma^2\mathbf{I})), \nonumber
\end{align}
and
\begin{align}
   &\frac{\partial{\widetilde{L}_{adv}^D}}{\partial{\mathbf{v}_j}}
  =\frac{\partial{\widetilde{L}_{adv2}^D}}{\partial{\mathbf{v}_j}} \label{eq:hhhhhh} \\
  =&F(\mathbf{v}_i^\prime\cdot\mathbf{v}_j+\mathcal{N}_{D,2}(C^2\sigma^2\mathbf{I})\cdot\mathbf{v}_j) \cdot (\mathbf{v}_i^\prime + \mathcal{N}_{D,2}(C^2\sigma^2\mathbf{I})). \nonumber
\end{align}

By combining $L_{sgm}^D$ and $\widetilde{L}_{adv}^D$, the \emph{novel loss function} of the discriminator $D$ can be expressed as
\begin{equation}\label{eq:dis_func}
    {L_{Nov}^D} = L_{sgm}^D + \lambda_1\widetilde{L}_{adv1}^D + \lambda_2\widetilde{L}_{adv2}^D,
\end{equation}
where $\lambda_1$ and $\lambda_2$ are the weights that control the relative importance between $L_{sgm}^D$ and $\widetilde{L}_{adv}^D$. Using Eqs.\,(\ref{eq:jjjjjj}), (\ref{eq:hhhhhh}), and (\ref{eq:dis_func}), we can derive the target gradient form that aligns with Eq.\,(\ref{eq:target_grad_form}).

According to Eq.\,(\ref{eq:ATM_Eq2}), the novel generator $\widetilde{L}^G$ can be expressed as follows:
\begin{align}
  \widetilde{L}^G 
&= \min\mathbb{E}_{(i, j) \in E}(\log(1 - F(\mathbf{v}_i\cdot\mathbf{v}_j^\prime+\mathcal{N}_{G,1}(C^2\sigma^2\mathbf{I})\cdot\mathbf{v}_i)) \nonumber\\
&\quad + \log(1 - F(\mathbf{v}_i^\prime\cdot\mathbf{v}_j+\mathcal{N}_{G,2}(C^2\sigma^2\mathbf{I})\cdot\mathbf{v}_j))). \label{GenObjFunc}
\end{align}

\subsection{Achieving Gradient Perturbation by Tuning Weights Between Modules}\label{sec:SensAnal}
In this section, we tune the weights $\lambda_1$ and $\lambda_2$ to yield the form like Eq.\,(\ref{eq:private_gra}) in Theorem~\ref{theo:resist_sens}, which implies that we can achieve gradient perturbation without additional noise injection. We further show that these settings are reasonable.

\begin{theorem}\label{theo:resist_sens}
Assuming that \text{$\lambda_1=\frac{1}{F(\mathbf{v}_i\cdot\mathbf{v}_j^\prime+\mathcal{N}_{D,1}(C^2\sigma^2\mathbf{I})\cdot\mathbf{v}_i)}$} and \text{$\lambda_2=\frac{1}{F(\mathbf{v}_i^\prime\cdot\mathbf{v}_j+\mathcal{N}_{D,2}(C^2\sigma^2\mathbf{I})\cdot\mathbf{v}_j)}$}, with the gradient clipping value $C$, we can achieve gradient perturbation without additional noise injection when optimizing the loss function ${L_{Nov}^D}$ in Eq.\,(\ref{eq:dis_func}). In this case, the sensitivity of the gradient sum is $BC$, which implies that node-level privacy is preserved.
\end{theorem}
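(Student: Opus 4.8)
The plan is to substitute the prescribed weights $\lambda_1,\lambda_2$ directly into the gradient of the novel discriminator loss ${L_{Nov}^D}$ of Eq.~(\ref{eq:dis_func}), show that the adversarial-module contribution collapses into a clean ``signal plus Gaussian noise'' expression, and then read off a clipped-gradient update matching the private form of Eq.~(\ref{eq:private_gra}); a standard Gaussian-mechanism sensitivity computation then delivers node-level DP. Concretely, I would first write $\partial {L_{Nov}^D}/\partial\mathbf{v}_i=\partial L_{sgm}^D/\partial\mathbf{v}_i+\lambda_1\,\partial\widetilde{L}_{adv1}^D/\partial\mathbf{v}_i$ and invoke Eq.~(\ref{eq:jjjjjj}), which already gives $\partial\widetilde{L}_{adv1}^D/\partial\mathbf{v}_i=F(z_1)\,(\mathbf{v}_j'+\mathcal{N}_{D,1}(C^2\sigma^2\mathbf{I}))$ with $z_1=\mathbf{v}_i\cdot\mathbf{v}_j'+\mathcal{N}_{D,1}(C^2\sigma^2\mathbf{I})\cdot\mathbf{v}_i$ (this uses the Sigmoid identity $F'=F(1-F)$ recalled in the excerpt), and analogously for $\mathbf{v}_j$ via Eq.~(\ref{eq:hhhhhh}). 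Since the theorem sets $\lambda_1=1/F(z_1)$ and $\lambda_2=1/F(z_2)$, the Sigmoid factors cancel exactly, leaving $\lambda_1\,\partial\widetilde{L}_{adv1}^D/\partial\mathbf{v}_i=\mathbf{v}_j'+\mathcal{N}_{D,1}(C^2\sigma^2\mathbf{I})$ and $\lambda_2\,\partial\widetilde{L}_{adv2}^D/\partial\mathbf{v}_j=\mathbf{v}_i'+\mathcal{N}_{D,2}(C^2\sigma^2\mathbf{I})$. Hence $\partial {L_{Nov}^D}/\partial\mathbf{v}_i=\partial L_{sgm}^D/\partial\mathbf{v}_i+\mathbf{v}_j'+\mathcal{N}_{D,1}(C^2\sigma^2\mathbf{I})$ and symmetrically for $\mathbf{v}_j$, which is exactly the target shape of Eqs.~(\ref{eq:target_gra}) and (\ref{eq:target_grad_form}) with $\ell=\mathbf{v}_j'$.

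Next I would turn this into the private-gradient form of Eq.~(\ref{eq:private_gra}). The only part of the expression above that touches the sensitive graph is $\partial L_{sgm}^D/\partial\mathbf{v}_i+\mathbf{v}_j'$ (the fake neighbour $\mathbf{v}_j'$ comes from the generator, which is itself post-processing of the private discriminator by Theorem~\ref{PostProc_Theo}, so it suffices to keep it under the clip), so the algorithm clips this part per sample to $\ell_2$-norm at most $C$ and leaves the $\mathcal{N}_{D,1}(C^2\sigma^2\mathbf{I})$ term untouched. The bookkeeping point I would stress is that a single draw of $\mathcal{N}_{D,1}(C^2\sigma^2\mathbf{I})$ (resp.\ $\mathcal{N}_{D,2}$) is shared across the $B$ samples of a batch, so aggregating the per-sample gradients reproduces $\tfrac1B\big(\sum_{(i,j)\in E_B}\mathbf{clip}(\partial L_{sgm}^D/\partial\mathbf{v}_i+\mathbf{v}_j')+\mathcal{N}(B^2C^2\sigma^2\mathbf{I})\big)$, i.e.\ the DPSGD-style update of Eq.~(\ref{eq:AdvSGMDPSGD_grad}), but with the noise produced \emph{inside} the loss rather than added afterwards --- hence ``no additional noise injection''.

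Then I would bound the sensitivity and assemble the privacy guarantee. Under the node-level neighbouring relation of Definition~\ref{def:prob_privdef}, replacing one node and its incident edges perturbs at most the $B$ edges appearing in the batch, and after clipping each per-sample contribution has $\ell_2$-norm at most $C$, so the $\ell_2$-sensitivity of $\sum_{(i,j)\in E_B}\mathbf{clip}(\cdot)$ is at most $BC$, as claimed. Applying the Gaussian mechanism with noise $\mathcal{N}(B^2C^2\sigma^2\mathbf{I})$ to a statistic of sensitivity $BC$ gives an $(\alpha,\alpha/(2\sigma^2))$-RDP step; composing this with the symmetric $\mathbf{v}_j$ update and over epochs by sequential composition (Theorem~\ref{Theo:RDP_comp}), applying the subsampling amplification of Theorem~\ref{theo:subsample} at sampling rate $\gamma$, and converting via Theorem~\ref{Theo:RDP_to_DP}, yields node-level $(\epsilon,\delta)$-DP for ${L_{Nov}^D}$, with the generators inheriting the guarantee by Theorem~\ref{PostProc_Theo} (as in Theorem~\ref{Theo:probDef_PostProcess}).

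The main obstacle I anticipate is the joint sensitivity/noise-scale argument in the \emph{node}-level model: I must argue carefully (i) that clipping the combined term $\partial L_{sgm}^D/\partial\mathbf{v}_i+\mathbf{v}_j'$, which mixes the skip-gram gradient with a generator-produced vector, still enforces a per-sample norm bound of exactly $C$; (ii) that a single node change cannot perturb more than the $B$ in-batch edges, so the $BC$ bound is not an underestimate; and (iii) --- most delicately --- that the noise genuinely emerges from backpropagation at scale $\mathcal{N}(B^2C^2\sigma^2\mathbf{I})$ rather than $\mathcal{N}(BC^2\sigma^2\mathbf{I})$, which hinges on drawing $\mathcal{N}_{D,1},\mathcal{N}_{D,2}$ once per step and reusing them across the batch inside the Sigmoid arguments. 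A secondary subtlety is that $\lambda_1,\lambda_2$ depend on the current (private) iterates and on $\mathbf{v}_j',\mathbf{v}_i'$; I should check this data-dependence leaks nothing extra, which again follows because the $\lambda$'s are functions only of already-released quantities and hence post-processing.
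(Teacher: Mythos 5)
Your proposal is correct and follows essentially the same route as the paper's proof: choose $\lambda_1,\lambda_2$ as the reciprocal Sigmoid values so the factor $F(\cdot)$ cancels in Eqs.~(\ref{eq:jjjjjj})--(\ref{eq:hhhhhh}), clip the remaining data-dependent term $\frac{\partial L_{sgm}^D}{\partial\mathbf{v}_i}+\mathbf{v}_j^\prime$ (resp.\ $\frac{\partial L_{sgm}^D}{\partial\mathbf{v}_j}+\mathbf{v}_i^\prime$), and read off the batch update $\frac{1}{B}\big(\sum_{(i,j)\in E_B}\mathbf{clip}(\cdot)+\mathcal{N}(B^2C^2\sigma^2\mathbf{I})\big)$ with gradient-sum sensitivity $BC$. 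Your added bookkeeping --- that a single noise draw per step shared across the batch is what yields the $B^2C^2\sigma^2$ scale, and that the data-dependence of $\lambda_1,\lambda_2$ is handled as post-processing of already-accounted quantities --- only makes explicit what the paper leaves implicit, so no gap.
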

\begin{proof}
According to Eq.\,(\ref{eq:dis_func}), the gradient of $L_{Nov}^D$ with respect to $\mathbf{v}_i$ is
\begin{align}
&\frac{\partial{L_{Nov}^D}}{\partial\mathbf{v}_i}
=\frac{\partial{L}_{sgm}^D}{\partial{\mathbf{v}_i}} + \lambda_1\frac{\partial\widetilde{L}_{adv1}^D}{\partial{\mathbf{v}_i}} 
=\frac{\partial{L}_{sgm}^D}{\partial{\mathbf{v}_i}} + \label{eq:xxx_grad}\\ 
&\lambda_1 F(\mathbf{v}_i\cdot\mathbf{v}_j^\prime+\mathcal{N}_{D,1}(C^2\sigma^2\mathbf{I})\cdot\mathbf{v}_i)\cdot(\mathcal{N}_{D,1}(C^2\sigma^2\mathbf{I})+\mathbf{v}_j^\prime). \nonumber
\end{align}

Let \text{$\lambda_1=\frac{1}{F(\mathbf{v}_i\cdot\mathbf{v}_j^\prime+\mathcal{N}_{D,1}(C^2\sigma^2\mathbf{I})\cdot\mathbf{v}_i)}$}. By applying gradient clipping, we can rewrite Eq.\,(\ref{eq:xxx_grad}) as
\begin{equation}\label{eq:theo_grad1}
    \frac{\partial{L_{Nov}^D}}{\partial\mathbf{v}_i}
    =\mathbf{clip}(\frac{\partial{L}_{sgm}^D}{\partial{\mathbf{v}_i}} + \mathbf{v}_j^\prime) + \mathcal{N}_{D,1}(C^2\sigma^2\mathbf{I}).
\end{equation}

According to Eq.\,(\ref{eq:dis_func}), the gradient of $L_{Nov}^D$ with respect to $\mathbf{v}_j$ is

\begin{align}
 &\frac{\partial{L_{Nov}^D}}{\partial\mathbf{v}_j}
=\frac{\partial{L}_{sgm}^D}{\partial{\mathbf{v}_j}} + \lambda_2\frac{\partial\widetilde{L}_{adv2}^D}{\partial{\mathbf{v}_j}} 
=\frac{\partial{L}_{sgm}^D}{\partial{\mathbf{v}_j}} + \label{eq:yyy_grad} \\ 
&\lambda_2 F(\mathbf{v}_i^\prime\cdot\mathbf{v}_j+
\mathcal{N}_{D,2}(C^2\sigma^2\mathbf{I})\cdot\mathbf{v}_j)\cdot(\mathcal{N}_{D,2}(C^2\sigma^2\mathbf{I})+\mathbf{v}_i^\prime). \nonumber
\end{align}

Similarly, let \text{$\lambda_2=\frac{1}{F(\mathbf{v}_i^\prime\cdot\mathbf{v}_j+\mathcal{N}_{D,2}(C^2\sigma^2\mathbf{I})\cdot\mathbf{v}_j)}$}. By applying gradient clipping, we can rewrite Eq.\,(\ref{eq:yyy_grad}) as
\begin{equation}\label{eq:theo_grad2}
\frac{\partial{L_{Nov}^D}}{\partial\mathbf{v}_j}
=\mathbf{clip}(\frac{\partial{L}_{sgm}^D}{\partial{\mathbf{v}_j}} + \mathbf{v}_i^\prime) + \mathcal{N}_{D,2}(C^2\sigma^2\mathbf{I}).
\end{equation}

As shown in Eqs.\,(\ref{eq:theo_grad1}) and (\ref{eq:theo_grad2}), we can achieve DP without additional noise.
Using $\widetilde{\nabla}_{\mathbf{v}_i}{L_{Nov}^D}$ and $\widetilde{\nabla}_{\mathbf{v}_j}{L_{Nov}^D}$ to denote the gradients of the batch samples, we have
\begin{equation}\label{eq:rewrit_sens_vi}
\begin{split}
    &\widetilde{\nabla}_{\mathbf{v}_i}{L_{Nov}^D}
  = \frac{1}{B}\sum_{(i,j) \in E_B}\frac{\partial{L_{Nov}^D}}{\partial\mathbf{v}_i} \\
  = &\frac{1}{B}(\sum_{(i,j) \in E_B}\mathbf{clip}(\frac{\partial{L}_{sgm}^D}{\partial{\mathbf{v}_i}} + \mathbf{v}_j^\prime) + \mathcal{N}_{D,1}(B^2C^2\sigma^2\mathbf{I})),
\end{split}
\end{equation}
and
\begin{equation}\label{eq:rewrit_sens_vj}
\begin{split}
      &\widetilde{\nabla}_{\mathbf{v}_j}{L_{Nov}^D}
    = \frac{1}{B}\sum_{(i,j) \in E_B}\frac{\partial{L_{Nov}^D}}{\partial\mathbf{v}_j} \\
    = &\frac{1}{B}(\sum_{(i,j) \in E_B}\mathbf{clip}(\frac{\partial{L}_{sgm}^D}{\partial{\mathbf{v}_j}} + \mathbf{v}_i^\prime) + \mathcal{N}_{D,2}(B^2C^2\sigma^2\mathbf{I})),
\end{split}
\end{equation}
where $E_B$ denotes a batch sample set, and the sensitivity of both $\sum_{(i,j) \in E_B}\mathbf{clip}(\frac{\partial{L}_{sgm}^D}{\partial{\mathbf{v}_i}} + \mathbf{v}_j^\prime)$ in Eq.\,(\ref{eq:rewrit_sens_vi}) and $\sum_{(i,j) \in E_B}\mathbf{clip}(\frac{\partial{L}_{sgm}^D}{\partial{\mathbf{v}_j}} + \mathbf{v}_i^\prime)$ in Eq.\,(\ref{eq:rewrit_sens_vj}) is $BC$. The $BC$ is the upper bound of the gradient sum, indicating that node-level privacy is maintained.
\end{proof}

\begin{remark}
It is worth noting that the sensitivity in Theorem~\ref{theo:resist_sens} is not reduced compared to the naive DP-ASGM in Section~\ref{appdix:DPSGM}. However, in this paper our target is not to decrease sensitivity but rather to achieve node-level DP and enhance utility through adversarial training.
\end{remark}

In what follows, we will constrain the \textit{Sigmoid} function to improve the adaptation of Theorem~\ref{theo:resist_sens}. We also show why the choice of $\lambda_1$ and $\lambda_2$ is appropriate. To simplify the discussion, we use the general notation $\lambda$ to represent either $\lambda_1$ or $\lambda_2$.

\textbf{Constrained \textit{Sigmoid}.}
$\lambda=\frac{1}{F(\cdot)}$ denotes a \emph{matrix}, where each of its elements is greater than 1 since $F(\cdot)$ is a \emph{Sigmoid} function. However, $\lambda$ may be ineffective when some of its elements are exceptionally small.
To tackle this issue, we use a constrained \textit{Sigmoid} function $S(x)=\frac{1}{1+\exp(-x)}$ to replace $F(\cdot)$ and $\sigma(\cdot)$, where the constrained \textit{Sigmoid} function is achieved by constraining $\exp(\cdot)$ with Algorithm~\ref{Alg:Expclip}\footnote{For more details, please refer to \url{https://gist.github.com/patricksurry}.}. This algorithm enables the control of the sharpness of the corners compared with the traditional clipping method. After that, $F(\cdot)$ and $\sigma(\cdot)$ are replaced with $S(\mathbf{x}) = \frac{1}{1+[a,b]}\in[\frac{1}{1+b},\frac{1}{1+a}]$, where $b>a>0$.

\begin{algorithm}\small
\caption{Exponential Clipping}\label{Alg:Expclip}
\KwIn{
Input $x$,
lower bound $a$,
upper bound $b$.}
\KwOut{Clipped exponential value \text{exp\_val}.}
\text{c\_tanh} = $2 / (\exp(2) + 1)$\;
$c = 1 / (2 \cdot \text{c\_tanh})$\;
\If{a is not None and b is not None}
{
    $c /= (b - a) / 2$\;
}
\text{exp\_val} = $\max(\min(x, b), a)$\;
\If{a is not None}
{
    \text{exp\_val} += $\exp(-c \cdot |x-a|)/(2c)$\;
}
\If{b is not None}
{
    \text{exp\_val} -= $\exp(-c \cdot |x-b|)/(2c)$\;
}
\Return \text{exp\_val}\;
\end{algorithm}

Based on the constrained \emph{Sigmoid}, Eq.\,(\ref{eq:NoveSGM_Eq}), and Eq.\,(\ref{eq:ATM_Eq2}), we can rewrite Eq.\,(\ref{eq:dis_func}) as follows:
\begin{equation}
{L_{Nov}^D} = \widehat{L}_{sgm}^D + \lambda_1\widehat{L}_{adv1}^D + \lambda_2\widehat{L}_{adv2}^D,\label{eq:novel_dis_func}    
\end{equation}
where
\begin{equation*}
 \begin{split}
       & \widehat{L}_{sgm}^D
    = \log S\left(\mathbf{v}_i \cdot \mathbf{v}_j\right) + \sum_{n=1}^k\mathbb{E}\left[\log S\left(-\mathbf{v}_j^n \cdot \mathbf{v}_i\right)\right], \\
    &\widehat{L}_{adv1}^D = \mathbb{E}_{(i,j) \in E}(-\log(1 - S(\mathbf{v}_i\cdot\mathbf{v}_j^\prime + \mathcal{N}_{D,1}(C^2\sigma^2\mathbf{I})\cdot\mathbf{v}_i))), \\
    &\widehat{L}_{adv2}^D = \mathbb{E}_{(i,j) \in E}(-\log(1 - S(\mathbf{v}_i^\prime\cdot\mathbf{v}_j + \mathcal{N}_{D,2}(C^2\sigma^2\mathbf{I})\cdot\mathbf{v}_j))),  
 \end{split}   
\end{equation*}
and according to Theorem~\ref{theo:resist_sens}, we have:
\begin{equation*}
\begin{split}
\lambda_1=\frac{1}{S(\mathbf{v}_i\cdot\mathbf{v}_j^\prime+\mathcal{N}_{D,1}(C^2\sigma^2\mathbf{I})\cdot\mathbf{v}_i)}, \\
\lambda_2=\frac{1}{S(\mathbf{v}_i^\prime\cdot\mathbf{v}_j+\mathcal{N}_{D,2}(C^2\sigma^2\mathbf{I})\cdot\mathbf{v}_j)}.
\end{split}
\end{equation*}

\textbf{Rationality of Weight Setting.}
As shown in the previous section, $\lambda_1$ and $\lambda_2$ are presented in matrix form. To evaluate the rationality of these settings, we use $\left|L_{Nov}^D\right|$ as a metric based on Eq.\,(\ref{eq:novel_dis_func}) and set $\lambda=1$ and $\lambda=0.5$ as two baselines, since $\lambda \in (0,1]$ is commonly used in deep learning models. With $a=10^{-5}$ and $b=120$, Fig.\,\ref{Fig:set_of_lambda} displays the loss function values for $\lambda=\frac{1}{S(\cdot)}$ across four datasets: PPI, Facebook, Wiki, and Blog (see dataset details in Section~\ref{sec:exp_set}). The y-axis represents the average $\left|L_{Nov}^D\right|$ value from five independent runs. From this figure, it can be observed that the gap between $\lambda=0.5$ and $\lambda=\frac{1}{S(\cdot)}$ is less than 6 across all tested datasets, while the gap between \text{$\lambda=1$} and \text{$\lambda=\frac{1}{S(\cdot)}$} is less than 2. These small gaps indicate the rationality of the design.

\begin{figure}[htb]
    \centering
    \includegraphics[width=2.5in]{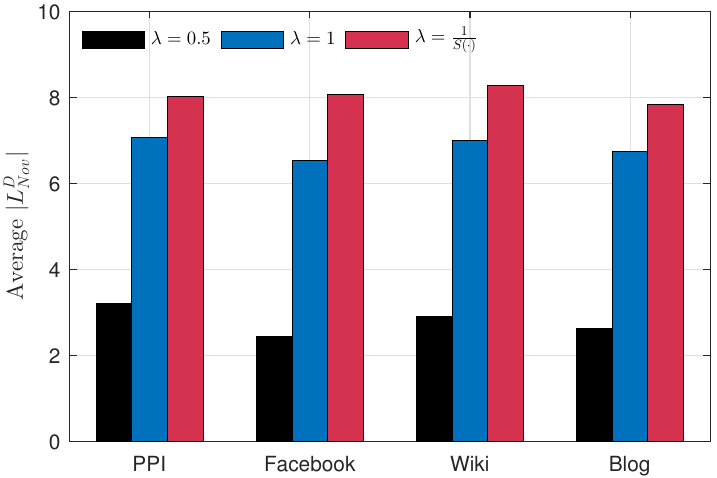}
    \caption{Effect of weight settings across different datasets.}
    \label{Fig:set_of_lambda}
\end{figure}

\subsection{Training Algorithm}
The complete training algorithm for AdvSGM is summarized in Algorithm~\ref{Alg:PrivCNS_Alg}.
In each training epoch, we perform alternating training between the discriminator $D$ and generator $G$.
Specifically, during discriminator training, we first fix $\Theta^G$ and sample $B(k+1)$ fake neighborhoods.
We generate $B$ positive samples and $Bk$ negative samples by Algorithm~\ref{Alg:gene_subGra}.
To compute the probability of privacy amplification more easily, we use $E_B$ and $E_{Bk}$ to optimize $\Theta^D$ respectively, instead of fusing $B$ real samples $E_B$ and $Bk$ negative samples $E_{Bk}$ to optimize $\Theta^D$. Nevertheless, the utility of AdvSGM will be similar when comparing with the use of $E_B$ and $E_{Bk}$ simultaneously because the one-hot encoded vector is used in skip-gram models. As a result, only a fraction of the node vectors in $\mathbf{W}_{in}$ and $\mathbf{W}_{out}$ are updated. In particular, for the gradient with respect to input weight matrix, that is $\frac{\partial L_{sgm}^D}{\partial\mathbf{W}_{in}}$, it is equivalent to taking the derivative for the hidden layer, that is \text{$\frac{\partial L_{sgm}^D}{\partial\mathbf{W}_{in}}=\frac{\partial L_{sgm}^D}{\partial\mathbf{v}_i}$}.
For the gradient with respect to output weight matrix, that is \text{$\frac{\partial L_{sgm}^D}{\partial\mathbf{W}_{out}}=\frac{\partial L_{sgm}^D}{\partial\mathbf{v}_j^n}$} with negative sampling, only a fraction of the node vectors in $\mathbf{W}_{out}$ are updated.
During generator training, we fix $\Theta^D$ and optimize $\Theta^G$ to generate fake neighborhoods that are close to real ones for each node under the guidance of the discriminator $D$. The discriminator and generator play against each other until AdvSGM converges.

\begin{algorithm}\small
\caption{Sample Generation}\label{Alg:gene_subGra}
\KwIn{Original graph $\mathcal{G}$.}
\KwOut{Positive samples $E_B$, negative samples $E_{Bk}$.}
Generate a batch sample set $E_B$ by sampling $B$ edges uniformly at random from $E$\; \label{code:gene_pos_samp}
Generate a batch node set $\mathcal{V}_{B} = \{V_1,\cdots,V_B\}$ by sampling $Bk$ nodes uniformly at random from $V$, where each $V_b$ includes $k$ nodes\;  \label{code:gene_neg_samp}
Set $E_{Bk}=\{\}$\;
\For{$(i,j) \in E_B$}
{\label{code:NegSam_start}
    \For{$V_b\in{\mathcal{V}_B}$}
    {
        Assign $(v_i,v_n)$ to $E_{Bk}$ for $v_n\in{V_b}$\;
    }
}\label{code:NegSam_end}
\Return $E_B$, $E_{Bk}$\;
\end{algorithm}

\begin{algorithm}\small
\caption{AdvSGM Algorithm}\label{Alg:PrivCNS_Alg}
\KwIn{\justifying{Original graph $\mathcal{G}$,
number of samples $B$,
number of training epochs $n^{epoch}$,
number of discriminator's epochs $n^D$,
number of generator's epochs $n^G$,
learning rates of discriminator and generator $\eta_d$, $\eta_g$,
embedding dimension $r$, negative sampling number $k$,
privacy parameters $\epsilon$, $\delta$, and $\sigma$.}}
\KwOut{Privacy-preserving embedding matrix $\Theta^D=\{\mathbf{W}_{in}, \mathbf{W}_{out}\}$.}
Initialize discriminator parameters $\Theta^D$, generator parameters $\Theta^G$\;
\label{code:AssMatri_Initi}
Normalize the parameters of the skip-gram module\;
\label{code:skip_gram_paraNorm}
\For{$epoch = 0$; $epoch < n^{epoch}$}
{
    \tcp{Train Discriminator}
    \For{$n = 0$; $n < n^D$}
    {    \label{code:D_iter_num}
         \tcp{Fake neighbors}
         Sample $B(k+1)$ fake neighbors ${v_i^\prime}^s$, ${v_j^\prime}^s$\;
         \tcp{Generate samples}
         Generate $B$ positive samples, denoted as $E_B$, and $Bk$ negative samples, denoted as $E_{Bk}$, uniformly at random by Algorithm~\ref{Alg:gene_subGra}\; \label{code:samp_edges}
        \For{$E_B$ and $E_{Bk}$}
        {\label{code:resp_train}
        \tcp{Update discriminator}
         Update $\Theta^D$ according to Eq.\,(\ref{eq:novel_dis_func}) with gradient clipping like Eqs.\,(\ref{eq:theo_grad1}) and (\ref{eq:theo_grad2}), which can achieve gradient perturbation without additional noise injection (see Theorem~\ref{theo:resist_sens})\;
         \label{code:grad_perb}
        \tcp{Update privacy accountant of RDP}
        Calculate RDP values\;
        \label{code:cal_RDP}
        $\hat{\delta}\leftarrow$ get privacy spent given the target $\epsilon$\;
        Stop optimization if $\hat{\delta} \geq \delta$\;
        \label{code:stop_opt}
        }
    }
    \tcp{Train Generator}
    \For{$n = 0$; $n < n^G$}
    {
        \label{code:G_iter_num}
        \tcp{Real neighbors}
         Sample $B(k+1)$ real neighbors $v_i^s$, $v_j^s$, where $(v_i,v_j) \in E$\;
         \tcp{Fake neighbors}
         Generate $B(k+1)$ fake neighbors ${v_i^\prime}^s$ for each node $v_i$\;
         \tcp{Fake neighbors}
         Generate $B(k+1)$ fake neighbors ${v_j^\prime}^s$ for each node $v_j$\;
        \tcp{Update generator}
        Update $\Theta^G$ according to Eq.\,(\ref{GenObjFunc})\;
    }
}
\Return $\mathbf{W}_{in}$, $\mathbf{W}_{out}$\;
\end{algorithm}

\section{Privacy and Complexity Analysis}\label{sec:PrivPRM_PrivCompAnalysis}
\textbf{Privacy Analysis.} Following Theorem~\ref{theo:subsample}, we adopt the functional perspective of RDP, where $\epsilon$ is a function of $\alpha$, with $1<\alpha<\infty$, and this function is determined by the private algorithm. For ease of presentation, we replace $\left(\alpha, \epsilon^\prime(\alpha)\right)$ with $\left(\alpha, \epsilon^\gamma(\alpha)\right)$ in the following proof, where $\gamma$ denotes the sampling probability. 
\begin{theorem}\label{PrivTheorem}
Given the number of nodes $|V|$, number of edges $|E|$, number of batch size $B$, and negative sampling number $k$, Algorithm~\ref{Alg:PrivCNS_Alg} satisfies node-level
$\big(\alpha, n^{epoch}n^D\epsilon^{\frac{B}{|E|}}(\alpha)+n^{epoch}n^D\epsilon^{\frac{Bk}{|V|}}(\alpha)\big)$-RDP. 
\end{theorem}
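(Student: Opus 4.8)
The plan is to break Algorithm~\ref{Alg:PrivCNS_Alg} into atomic private operations, bound the R\'enyi loss of each, and then glue them with subsampling amplification, RDP composition and post-processing. I would first show that a single discriminator update is a Gaussian mechanism: by Theorem~\ref{theo:resist_sens}, once the data-dependent weights $\lambda_1,\lambda_2$ are substituted, the noisy update on the positive batch $E_B$ takes the form of Eqs.~(\ref{eq:rewrit_sens_vi}) and (\ref{eq:rewrit_sens_vj}), i.e., a clipped gradient sum of $\ell_2$-sensitivity $BC$ perturbed by Gaussian noise of variance $B^2C^2\sigma^2$. The Gaussian-mechanism RDP guarantee then yields that this release is $(\alpha,\epsilon(\alpha))$-RDP with $\epsilon(\alpha)=\frac{\alpha(BC)^2}{2B^2C^2\sigma^2}=\frac{\alpha}{2\sigma^2}$ for all $\alpha>1$; applying the same argument to the negative batch $E_{Bk}$ (where both the sensitivity and the noise standard deviation scale by $Bk$) gives the identical base loss $\epsilon(\alpha)$.

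Next I would invoke the subsampling amplification of Theorem~\ref{theo:subsample}. The positive-batch update only sees $E_B$, which Algorithm~\ref{Alg:gene_subGra} obtains by drawing $B$ edges without replacement from the $|E|$ edges, i.e., with sampling rate $\gamma=B/|E|$, so the amplified update is $(\alpha,\epsilon^{B/|E|}(\alpha))$-RDP for integer $\alpha\ge 2$; the negative-batch update only sees $E_{Bk}$, built from $Bk$ nodes drawn without replacement from $V$ (rate $\gamma=Bk/|V|$), hence $(\alpha,\epsilon^{Bk/|V|}(\alpha))$-RDP. Establishing that the node-level neighboring relation of Definition~\ref{def:prob_privdef} fits the ``subsampling without replacement'' template at exactly these two rates is the crux of this step: here I would lean on the one-hot / sparse-update structure of the skip-gram, which confines each update to the rows indexed by the sampled pair, so that a single differing node influences at most one sampled item in each of the $E_B$ and $E_{Bk}$ pools.

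It then remains to compose. Each of the $n^{epoch}n^D$ inner discriminator iterations performs exactly one positive-batch update and one negative-batch update, and R\'enyi losses add under adaptive composition, so discriminator training satisfies $\big(\alpha,\;n^{epoch}n^D\,\epsilon^{B/|E|}(\alpha)+n^{epoch}n^D\,\epsilon^{Bk/|V|}(\alpha)\big)$-RDP. Discriminator training is the only part of the algorithm that accesses $\mathcal{G}$: the generator iterations update $\Theta^G$ solely from the frozen, already-private matrices $\mathbf{W}_{in},\mathbf{W}_{out}$ and fresh noise $\mathcal{N}_{G,1},\mathcal{N}_{G,2}$, and the accountant steps (lines~\ref{code:cal_RDP}--\ref{code:stop_opt}) read only released quantities, so by post-processing (Theorem~\ref{PostProc_Theo}) they cost nothing. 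Since the released output is exactly $\Theta^D=\{\mathbf{W}_{in},\mathbf{W}_{out}\}$, the claimed bound follows.

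The main obstacle I anticipate is the interface between the per-step analysis and the node-level definition. Two points need care: (i) verifying that the data-dependent matrices $\lambda_1=1/S(\cdot)$ and $\lambda_2=1/S(\cdot)$ do not inflate the clipped gradient sum's $\ell_2$-sensitivity beyond $BC$ under a \emph{node} change rather than an edge change, which is precisely where the constrained \emph{Sigmoid} bounds of Algorithm~\ref{Alg:Expclip} together with the clipping operator are needed; and (ii) arguing rigorously that the sparse one-hot update reduces a node change to a single-item change in the sampled pools, so that the rates $B/|E|$ and $Bk/|V|$ are legitimate inputs to Theorem~\ref{theo:subsample}. Everything downstream --- evaluating $\epsilon(\alpha)$, reading off $\epsilon^{\gamma}(\alpha)$, and summing over iterations --- is routine bookkeeping.
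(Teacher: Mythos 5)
Your proposal is correct and follows essentially the same route as the paper's own proof: per-iteration gradient perturbation justified by Theorem~\ref{theo:resist_sens}, subsampling amplification at rates $\frac{B}{|E|}$ for $E_B$ and $\frac{Bk}{|V|}$ for $E_{Bk}$ via Theorem~\ref{theo:subsample}, sequential composition over the $n^{epoch}n^D$ discriminator iterations, and post-processing (Theorem~\ref{PostProc_Theo}) to cover the generator. Your version is in fact more explicit than the paper's (e.g., computing the base Gaussian loss $\frac{\alpha}{2\sigma^2}$ and flagging the node-level sensitivity and sampling-template caveats, which the paper's proof passes over silently).
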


\begin{proof}
For the discriminator, the probability of generating $B$ edges uniformly at random from $E$ (line~\ref{code:gene_pos_samp} in Algorithm~\ref{Alg:gene_subGra}) is $\frac{B}{|E|}$.
The probability of generating $Bk$ nodes $\mathcal{V}_B$ uniformly at random from $V$ (line~\ref{code:gene_neg_samp} in Algorithm~\ref{Alg:gene_subGra}) is $\frac{Bk}{|V|}$. Thus, given the known $E_B$, it is easy to determine that the probability of generating $E_{Bk}$ is also $\frac{Bk}{|V|}$ (lines~\ref{code:NegSam_start}-\ref{code:NegSam_end} in Algorithm~\ref{Alg:gene_subGra}). $E_B$ and $E_{Bk}$ are used to train the discriminator's parameters in sequence (line~\ref{code:resp_train} in Algorithm~\ref{Alg:PrivCNS_Alg}). 
Theorem~\ref{theo:resist_sens} reveals that Line~\ref{code:grad_perb} in Algorithm~\ref{Alg:PrivCNS_Alg} achieves privacy protecting through gradient perturbation.
According to the sequential composition property (Theorem~\ref{Theo:RDP_comp}), after $n^{epoch}n^D$ iterations, the discriminator is node-level $\big(\alpha, n^{epoch}n^D\epsilon^{\frac{B}{|E|}}(\alpha)+n^{epoch}n^D\epsilon^{\frac{Bk}{|V|}}(\alpha)\big)$-RDP.  
For the generator, the post-processing property (Theorem~\ref{PostProc_Theo}) ensures that the generator's privacy level aligns with that of the discriminator. Therefore, Algorithm~\ref{Alg:PrivCNS_Alg} obeys node-level $\big(\alpha, n^{epoch}n^D\epsilon^{\frac{B}{|E|}}(\alpha)+n^{epoch}n^D\epsilon^{\frac{Bk}{|V|}}(\alpha)\big)$-RDP. Finally, Theorem~\ref{Theo:RDP_to_DP} is applied to convert the RDP back to the standard node-level DP.
\end{proof}

\textbf{Complexity Analysis.}
To analyze the time complexity of AdvSGM in Algorithm~\ref{Alg:PrivCNS_Alg}, we can break down the computations involved in each major step. 
The outer loop runs for $n^{epoch}$ epochs, where $n^{epoch}$ denotes the number
of training epochs. Within each epoch, the discriminator training loop, which consists of $n^D$ iterations (see Line~\ref{code:D_iter_num} of Algorithm~\ref{Alg:PrivCNS_Alg}), samples $B(k+1)$ fake neighbors and generates $B(k+1)$ positive and negative edges for updating the discriminator parameters. The time complexity of updating the DP cost using RDP depends on the specific implementation of RDP. Different versions of RDP may result in slight differences in time complexity, but according to~\cite{bu2023differentially}, these implementations all have asymptotic complexity of $\mathcal{O}(Br\gamma)$, where $\gamma$ denotes the sampling probability. Therefore, the time complexity for updating the discriminator within one epoch can be approximated as $\mathcal{O}(n^DB(k+1)r+n^DrB\gamma)$. 
Within each generator training iteration, the generator loop, which consists of $n^G$ iterations (see Line~\ref{code:G_iter_num} of Algorithm~\ref{Alg:PrivCNS_Alg}), samples $B(k+1)$ real neighbors and generates $2B(k+1)$ fake neighbors for updating the generator parameters. The time complexity for updating the generator within one epoch can be approximated as $\mathcal{O}(n^GB(k+1)r)$. Therefore, the overall time complexity for the entire algorithm, running for $n^{epoch}$ epochs, is $\mathcal{O}(n^{epoch}n^DBkr+n^{epoch}n^DBr\gamma+ n^{epoch}n^GBkr)$. The complexity is linear with respect to the iteration number and batch size, so our method is scalable and can be applied to large-scale networks.
\section{Experiments}\label{sec:experiments}
In this section, we evaluate the performance of AdvSGM~\footnote{Our code is available at \url{https://github.com/sunnerzs/AdvSGM}.} in two downstream tasks: link prediction and node clustering.
Link prediction is a commonly used benchmark task in graph learning models, which tests an algorithm's ability to predict missing links in a graph, assessing how well it infers potential connections based on existing node information. Node clustering evaluates an algorithm's ability to identify and group nodes into meaningful clusters based on their similarities, assessing how well it discovers inherent group structures. 
We aim to address the following four questions:
\begin{itemize}
[leftmargin=5mm]
  \item How much do the parameters impact the performance of AdvSGM? (see Section~\ref{exp:ParaImp})
  \item How is the performance of different differentially private skip-gram models? (see Section~\ref{sec:eps_on_PrivSGMs})
  \item How much does the privacy budget influence the performance of AdvSGM and other private graph models in link prediction? (see Section~\ref{exp:eps_on_lp})
 \item How much does the privacy budget influence the performance of AdvSGM and other private graph models in node clustering? (see Section~\ref{exp:eps_on_nclu})
\end{itemize}

\subsection{Experimental Setup}\label{sec:exp_set}
\textbf{Datasets.}
To comprehensively evaluate our proposed method, we conduct extensive experiments on the following six real-world graph datasets: namely PPI, Facebook, Wiki, Blog, Epinions, DBLP. Since we focus on simple graphs in this work, all datasets are pre-processed to remove self-loops. The details of the datasets are provided as follows.

\begin{itemize}
[leftmargin=5mm]
\item PPI~\cite{stark2006biogrid}: This dataset represents a human Protein-Protein Interaction network, consisting of 3,890 nodes from 50 classes and 76,584 edges. The nodes represent proteins, and the edges indicate interactions between these proteins.
\item Facebook\footnote{\label{data:facebook}\url{https://snap.stanford.edu/data/ego-Facebook.html}}: This dataset represents a social network with 4,039 nodes and 88,234 edges. The nodes represent users, and the edges represent the relationships between them.
\item Wiki\footnote{\url{https://www.mattmahoney.net/dc/text.html}}: This dataset consists of a network of hyperlinks between Wikipedia pages, with 4,777 nodes from 40 categories and 92,517 edges. Each node represents a Wikipedia page, and each edge represents a hyperlink between two pages.
\item Blog\footnote{\label{data:blog}\url{http://datasets.syr.edu/datasets/BlogCatalog3.html}}: This dataset is an online social network containing 10,312 nodes from 39 categories and 333,983 edges. The nodes represent users, and the edges represent relationships between these users.
\item Epinions\footnote{\url{https://snap.stanford.edu/data/soc-Epinions1.html}}: The Epinions dataset is a trust network with 75,879 nodes and 508,837 edges. The nodes represent users, and the directed edges represent trust relationships between them.
\item DBLP\footnote{\label{data:dblp}\url{https://www.aminer.cn/citation}}: This dataset represents a scholarly network with 2,244,021 nodes and 4,354,534 edges. The nodes represent papers, authors, and venues, and the edges represent authorships and the venues where papers are published.
\end{itemize}

\textbf{Evaluation Metrics.}
For the link prediction task, all existing links in each dataset are randomly split into a training set 90\% and a test set 10\%. For the test set, we sample the same number of node pairs without connected edges as negative test links to evaluate link prediction performance. For the training set, we additionally sample the same number of node pairs without edges to construct negative training data. AUC is used to measure performance. 
For the node clustering task, we feed the embedding vectors generated by each algorithm into a node clustering algorithm. Following~\cite{nguyen2018learning}, we adopt the Affinity Propagation algorithm~\cite{frey2007clustering} as the clustering method and evaluate the clustering results in terms of mutual-information (MI). For each result, we measure it over five experiments to report the average value. A larger AUC or MI implies a better utility.

\textbf{Competitive Methods.}
To establish a baseline for comparison, we utilize four state-of-the-art private graph learning methods, namely DPGGAN~\cite{yang2020secure}, DPGVAE~\cite{yang2020secure}, GAP~\cite{sajadmanesh2023gap}, and DPAR~\cite{zhang2024dpar}.
In this study, we simulate a scenario where the graphs only contain structural information, while GAP and DPAR rely on node features.
To ensure a fair evaluation, similar to prior research~\cite{du2022understanding}, we use randomly generated features as inputs for GAP and DPAR.
Additionally, we design different versions of skip-gram for comparison: SGM (No DP), DP-SGM and DP-ASGM. Here, SGM (No DP) refers to the original skip-gram model (i.e., LINE~\cite{tang2015line}), DP-SGM denotes the skip-gram model with DPSGD, and DP-ASGM represents the skip-gram model with adversarial training based on DPSGD. 
Note that skip-gram (see Eq.\,(\ref{eq:NoveSGM_Eq})) incorporates two vectors for each node, namely a context (output) vector and a node (input) vector. However, during our testing phase, we do not observe any performance improvement by utilizing both vectors together. We only employ the node vectors for our experiments, similar to previous methods~\cite{tang2015line, zhou2017scalable}.

\textbf{Parameter Settings.}
In the link prediction and node clustering tasks, we use training epochs $n^{epoch}=50$ for each dataset. In each epoch, we set $n^D=15$ and $n^G=5$ for both tasks. We set the embedding dimension $r=128$. It is worth noting that we do not specifically highlight the impact of $r$ as it is commonly used in various graph embedding methods~\cite{perozzi2014deepwalk,tang2015line,lai2017prune,tu2018unified}. To maintain compatibility with most skip-gram based methods~\cite{tang2015line,du2022understanding,tu2018unified}, we set \text{$k = 5$}. We normalize the parameters of skip-gram module in AdvSGM to ensure that \text{$C=1$}. For privacy parameters, we follow existing works~\cite{shokri2015privacy, abadi2016deep, yang2020secure} to fix \text{$\delta=10^{-5}$} and \text{$\sigma=5$}. Then, we vary the privacy budget $\epsilon$ among the values $\{1, 2, 3, 4, 5, 6\}$ to see how much utilities are preserved under different privacy budgets. 
Also, we vary the learning rates $\eta_d$ and $\eta_g$, as well as the batch size $B$ to verify the effect on the utility of AdvSGM. We fix $a=10^{-5}$ to ensure that the upper bound of $S(\mathbf{x})$ approaches 1, and vary $b$ to verify the effect on the utility of AdvSGM. To ensure consistency with the original papers, we utilize the official GitHub implementations for DPGGAN, DPGVAE, GAP and DPAR. We replicate the experimental setup as described in those papers.

\subsection{Impact of Parameters}\label{exp:ParaImp}
In this section, taking link prediction as an example, we investigate the effects of different parameters on the performance of AdvSGM. We conduct experiments on all datasets with varying the learning rate $\eta$ from 0.01 to 0.3, the batch size $B$ from 16 to 512, and the parameter $b$ from 40 to 140. 

\subsubsection{Parameter $\eta$} In this experiment, we examine the influence of the learning rate on the performance of AdvSGM in the context of link prediction task. We conduct experiments on three datasets: PPI, Facebook, and Blog, with different values of the learning rates $\eta_d$ and $\eta_g$: 0.01, 0.05, 0.1, 0.15, 0.2, 0.25 and 0.3. The results are summarized in Table~\ref{Tab:impact_of_learnRate}. From the table, we can observe that the best performance is achieved when \text{$\eta_d=\eta_g=0.1$} for the tested three datasets. Therefore, we set \text{$\eta_d=\eta_g=0.1$} as the default parameter configuration for all subsequent experiments. Additionally, we consistently find that all standard deviations are no more than 0.02 across all datasets, indicating that AdvSGM exhibits good stability.

\begin{table}[htb] 
\centering
\caption{Summary of AUC values with different $\eta_d$ and $\eta_g$, given $\epsilon=6$ (Result: average AUC $\pm$ standard deviation). \textbf{Bold}: best}\label{Tab:impact_of_learnRate}
\begin{tabular}{c|ccc}
\hline
$\eta_d=\eta_g$ & PPI                          & Facebook                     & Blog               \\ \hline
0.01	&0.5327$\pm$0.0092 	&0.6289$\pm$0.0024 	&0.5393$\pm$0.0065 \\	     	 	
0.05	&0.5753$\pm$0.0050 	&0.6834$\pm$0.0010 	&0.5623$\pm$0.0081 \\ 	     	 	
0.1	    &\textbf{0.6095$\pm$0.0101} 	&\textbf{0.7070$\pm$0.0064} 	&\textbf{0.6595$\pm$0.0200} \\ 	     	 	
0.15	&0.5687$\pm$0.0096 	&0.6478$\pm$0.0048 	&0.5876$\pm$0.0055 \\      	 	
0.2	    &0.5246$\pm$0.0107 	&0.6057$\pm$0.0021 	&0.5368$\pm$0.0023 \\      	 	
0.25	&0.5264$\pm$0.0044 	&0.5673$\pm$0.0027 	&0.5216$\pm$0.0043 \\      	 	
0.3	    &0.5224$\pm$0.0077 	&0.5471$\pm$0.0029 	&0.5147$\pm$0.0004 \\ \hline
\end{tabular}
\end{table}

\subsubsection{Parameter $B$}
In this experiment, we investigate the impact of the parameter $B$ on the performance of AdvSGM in terms of link prediction task. Specifically, we consider different values of $B$, namely, $16, 32, 64, 128, 256, 512$ for all datasets. As illustrated in Table~\ref{Tab:impact_of_batchSize}, for PPI and Facebook, the optimal results are achieved when \text{$B=128$}.
For Blog, while the AUC performs better when \text{$B=512$}, using \text{$B=128$} and \text{$B=256$} can still yield competitive results in terms of AUC. Therefore, we set \text{$B=128$} as the default parameter configuration for all subsequent experiments. Also, we consistently observe that all standard deviations remain consistently below 0.02 across all datasets, indicating that AdvSGM is very stable.

\begin{table}[htb]
\centering
\caption{Summary of AUC values with different $B$, given $\epsilon=6$ (Result: average AUC $\pm$ standard deviation). \textbf{Bold}: best}\label{Tab:impact_of_batchSize}
\begin{tabular}{c|ccc}
\hline
$B$ & PPI                          & Facebook                     & Blog                         \\ \hline
16	&0.5057$\pm$0.0087 	&0.5273$\pm$0.0042 	&0.5052$\pm$0.0010 \\	 	 	
32	&0.5250$\pm$0.0040 	&0.5602$\pm$0.0067 	&0.5142$\pm$0.0053 \\ 	 	
64	&0.5600$\pm$0.0019 	&0.6403$\pm$0.0024 	&0.5465$\pm$0.0049 \\ 	 	
128	&\textbf{0.6095$\pm$0.0101} 	&\textbf{0.7070$\pm$0.0064} 	&0.6595$\pm$0.0200 \\ 	 	
256	&0.5676$\pm$0.0069 	&0.6599$\pm$0.0031 	&0.6625$\pm$0.0076 \\ 	 	
512	&0.5077$\pm$0.0056 	&0.5286$\pm$0.0053 	&\textbf{0.6722$\pm$0.0154} \\ \hline
\end{tabular}
\end{table}

\subsubsection{Parameter $b$}
Recall from Section~\ref{sec:SensAnal} that both $\lambda_1$ and $\lambda_2$ are defined as $\frac{1}{S(\cdot)}$, where $S(\cdot)$ is constrained within the range $[\frac{1}{1+b}, \frac{1}{1+a}]$. In this experiment, we set $a$ to $10^{-5}$ to ensure that $S(\mathbf{\cdot})$ converges to 1. With $b$ varying from 40 to 140, Table~\ref{Tab:impact_of_b} illustrates the impact of parameter $b$ on AdvSGM for link prediction. As $b$ increases, there is a gradual improvement in performance, and the optimal results across all datasets are achieved when \text{$b=140$}. However, the performance gain for \text{$b=140$} compared to \text{$b=120$} is minimal. Therefore, we choose \text{$b=120$} as the default parameter configuration for all subsequent experiments.

\begin{table}[htb]
\centering
\caption{Summary of AUC values with different $b$, given $\epsilon=6$ (Result: average AUC $\pm$ standard deviation). \textbf{Bold}: best}\label{Tab:impact_of_b}
\begin{tabular}{c|ccc}
\hline
$b$ & PPI                          & Facebook                     & Blog                           \\ \hline
40	&0.5053$\pm$0.0033 	&0.5261$\pm$0.0044 	&0.5087$\pm$0.0025 \\ 	 	 	
60	&0.5203$\pm$0.0054 	&0.5557$\pm$0.0052 	&0.5142$\pm$0.0037 \\  	 	
80	&0.5302$\pm$0.0036 	&0.6022$\pm$0.0019 	&0.5424$\pm$0.0107 \\  	 	
100	&0.5694$\pm$0.0063 	&0.6627$\pm$0.0029 	&0.5905$\pm$0.0129 \\  	 	
120	&0.6095$\pm$0.0101 	&0.7070$\pm$0.0064 	&0.6595$\pm$0.0200 \\  	 	
140	&\textbf{0.6584$\pm$0.0140} 	&\textbf{0.7434$\pm$0.0057} 	&\textbf{0.7157$\pm$0.0175} \\ \hline
\end{tabular}
\end{table}

\begin{table}[htb]
\centering
\caption{Summary of AUC/MI values with different $\epsilon$. \textbf{Bold}: best}\label{Tab:eps_on_PrivSGMs}
\begin{tabular}{c|ccc|cc}
\hline
\multirow{2}{*}{Algorithms} & \multicolumn{3}{c|}{AUC}                                              & \multicolumn{2}{c}{MI}            \\ \cline{2-6}
                           & PPI             & Facebook        & Blog            & PPI             & Blog            \\ \hline
SGM(No DP)                 &0.5924          &0.6447          &0.6214          &0.7385           &0.5363          \\ 
AdvSGM(No DP)   &\textbf{0.6914}   &\textbf{0.7588}   &\textbf{0.7078}          &\textbf{1.1927}   &\textbf{0.9942}  \\ \hline
DP-SGM($\epsilon$=1)        & 0.5063          & 0.5077          & 0.5030          & 0.5768          & 0.3869          \\
DP-ASGM($\epsilon$=1)     & 0.5077          & 0.5071          & 0.5036          & 0.5798          & 0.4530          \\
AdvSGM($\epsilon$=1)        &\textbf{0.5083} &\textbf{0.5398} &\textbf{0.5187} &\textbf{0.5810} &\textbf{0.5284} \\ \hline
DP-SGM($\epsilon$=2)        & 0.5076          & 0.5040          & 0.5027          & 0.5704          & 0.4654          \\
DP-ASGM($\epsilon$=2)     & 0.5078          & 0.5069          & 0.5028          & 0.5820          & 0.4731          \\
AdvSGM($\epsilon$=2)        & \textbf{0.5152} & \textbf{0.5459} & \textbf{0.5964} & \textbf{0.5847} & \textbf{0.6787} \\ \hline
DP-SGM($\epsilon$=3)        & 0.5095          & 0.5033          & 0.5027          & 0.5686          & 0.4510          \\
DP-ASGM($\epsilon$=3)     & 0.5047          & 0.5052          & 0.5054          & 0.6130          & 0.4601          \\
AdvSGM($\epsilon$=3)        & \textbf{0.5271} & \textbf{0.5779} & \textbf{0.6593} & \textbf{0.6332} & \textbf{0.7921} \\ \hline
DP-SGM($\epsilon$=4)        & 0.5057          & 0.5047          & 0.5020          & 0.7200          & 0.4679          \\
DP-ASGM($\epsilon$=4)     & 0.5061          & 0.5066          & 0.5023          & 0.7225          & 0.4560          \\
AdvSGM($\epsilon$=4)        & \textbf{0.5655} & \textbf{0.6368} & \textbf{0.6629} & \textbf{0.7374} & \textbf{0.7926} \\ \hline
DP-SGM($\epsilon$=5)        & 0.5074          & 0.5085          & 0.5022          & 0.5512          & 0.3663          \\
DP-ASGM($\epsilon$=5)     & 0.5096          & 0.5097          & 0.5040          & 0.6189          & 0.4549          \\
AdvSGM($\epsilon$=5)        & \textbf{0.5878} & \textbf{0.6829} & \textbf{0.6688} & \textbf{0.9038} & \textbf{0.7978} \\\hline
DP-SGM($\epsilon$=6)        & 0.5077          & 0.5054          & 0.5020          & 0.5851          & 0.3711          \\
DP-ASGM($\epsilon$=6)     & 0.5084          & 0.5079          & 0.5024          & 0.6140          & 0.3668          \\
AdvSGM($\epsilon$=6)       & \textbf{0.6095} & \textbf{0.7070} & \textbf{0.6595} & \textbf{1.0818} & \textbf{0.8777} \\ \hline
\end{tabular}
\end{table}

\subsection{Comparison Between Private Skip-gram Models}\label{sec:eps_on_PrivSGMs}
In this section, we compare different private skip-gram models. Table~\ref{Tab:eps_on_PrivSGMs} presents the AUC and MI results for all the methods. From this table, we make three important observations. First, AdvSGM(No DP) achieves better utility on all tested datasets compared to SGM(No DP), which indicates that the adversarial training module can effectively improve the performance of SGM(No DP). 
Second, AdvSGM significantly outperforms the other private models, DP-SGM and DP-ASGM, across all privacy budgets and datasets in terms of both AUC and MI. This suggests that our proposed adversarial training module, which introduces two additional noise terms into the activation functions, is more effective than the direct perturbation approach based on DPSGD used in DP-SGM and DP-ASGM. Another important observation from Table~\ref{Tab:eps_on_PrivSGMs} is that as $\epsilon$ increases, AdvSGM tends to produce results comparable to those of the non-private SGM. Notably, at $\epsilon=6$, AdvSGM even surpasses the non-private SGM, indicating that our adversarial training module not only preserves the privacy of the skip-gram but also enhances its utility.
Overall, the findings clearly demonstrate the advantages of AdvSGM over other privacy-preserving skip-gram models, highlighting its ability to achieve a better balance between privacy and utility.

\begin{figure*}[htb]
  \centerline{
  \includegraphics[width=6.5in]{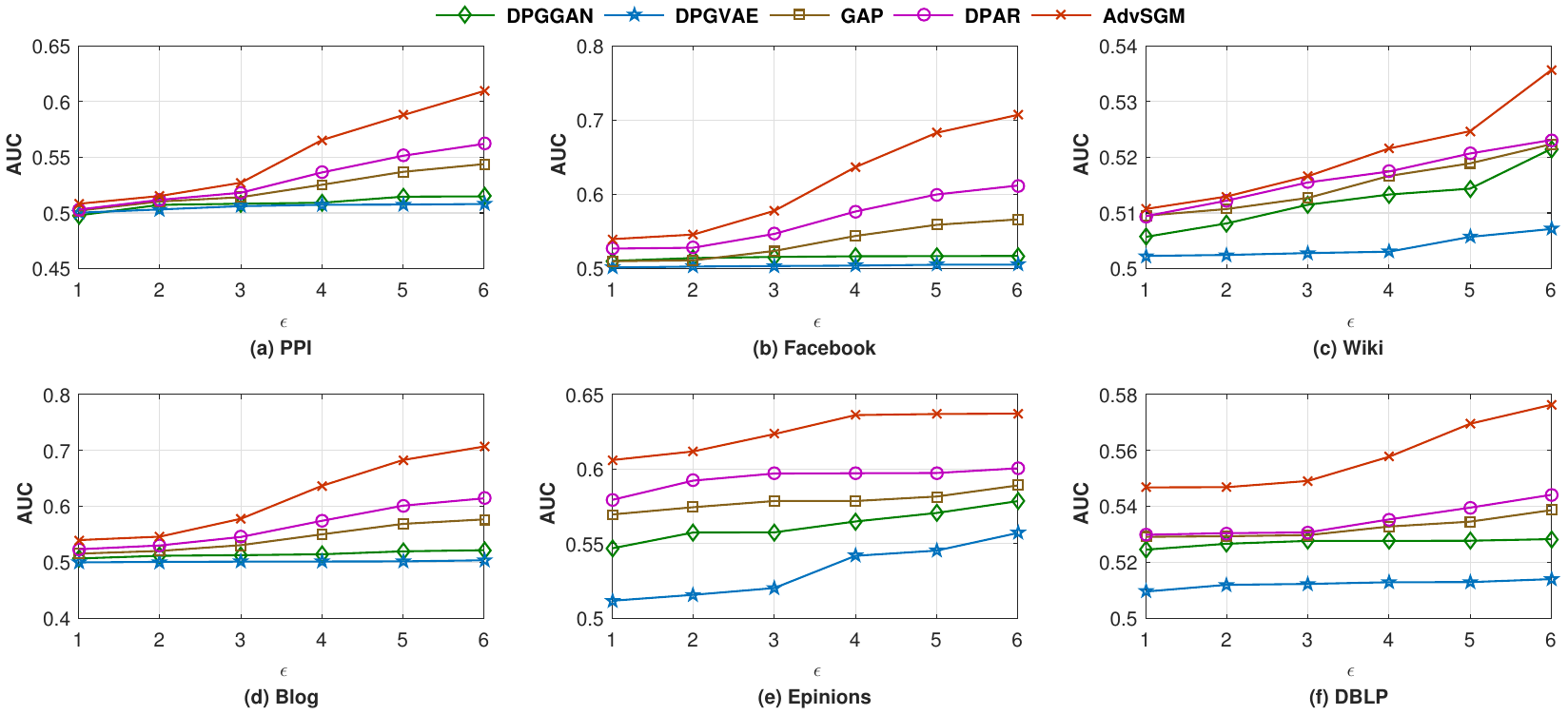}
  }
  \caption{Impact of Privacy Budget on Link Prediction.}
  \label{fig:Exp_PrivBud_on_lp_fig}
\end{figure*}

\begin{figure*}[htb]
  \centering
  \includegraphics[width=6.5in]{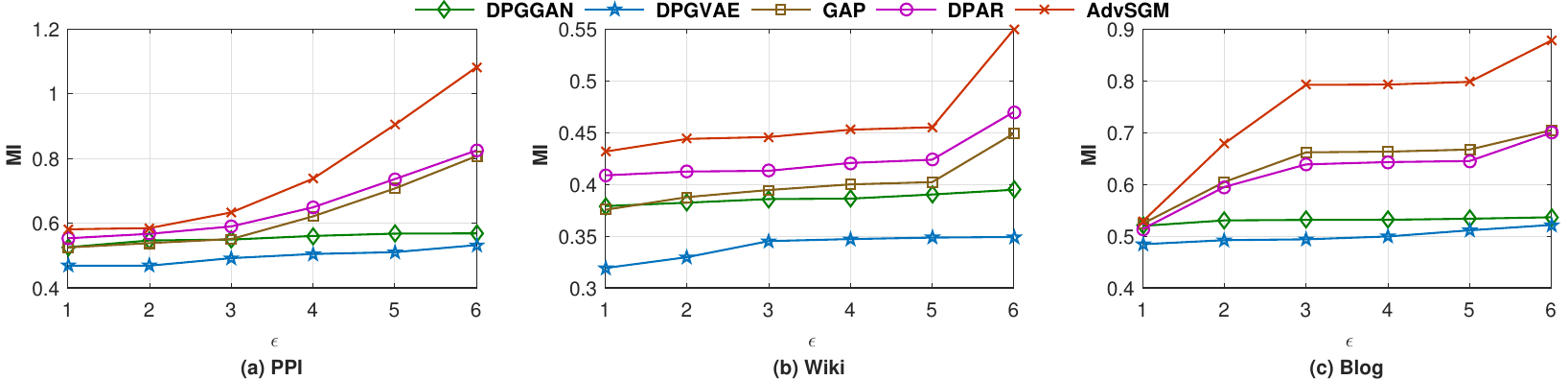}
  \caption{Impact of Privacy Budget on Node Clustering.}\label{fig:Exp_PrivBud_on_NC_fig}
\end{figure*}

\subsection{Impact of Privacy Budget on Link Prediction}\label{exp:eps_on_lp}
We compare the AUC result of different methods under six privacy budgets: 1, 2, 3, 4, 5, and 6. The AUC results of all methods are illustrated in Fig.\,\ref{fig:Exp_PrivBud_on_lp_fig}. From this figure, we can see that AdvSGM are all significantly better than other private models in all privacy budgets across all datasets. This main reason is that our designed adversarial training module enhances the utility of skip-gram without compromising privacy. For DPAR, despite achieving higher AUC scores than DPGGAN, DPGVAE, and GAP across all datasets and privacy budgets, it still falls short of AdvSGM in terms of AUC score.
DPGGAN and DPGVAE use the MA mechanism \cite{abadi2016deep} to address the issue of excessive privacy budget splitting during optimization, but they still produce poor results. This occurs because these methods often converge prematurely under MA, especially with a limited privacy budget, leading to reduced performance in both privacy and utility.
GAP employs the aggregation perturbation (AP) technique to ensure differential privacy in GNNs, yet it also yields poor results. The main drawback of AP is its incompatibility with standard GNN architectures due to the high privacy costs involved. Conventional GNN models frequently query aggregation functions with each parameter update, requiring the re-perturbation of all aggregate outputs in every training iteration to maintain differential privacy. This process results in a significant increase in privacy costs.

\subsection{Impact of Privacy Budget on Node Clustering}\label{exp:eps_on_nclu}
The MI results for node clustering are shown in Fig.\,\ref{fig:Exp_PrivBud_on_NC_fig}. Note that we only evaluate MI on the PPI, Wiki, and Blog datasets, due to the absence of labeled data in the other datasets. From this figure, a key observation is that our AdvSGM consistently achieves the highest MI accuracy among all private methods. 
In conclusion, the results from these experiments strongly support our claim. The adversarial training component in AdvSGM enables it to strike an effective balance between privacy preservation and task utility. As such, AdvSGM proves to be a highly effective and versatile privacy-preserving approach that can be readily applied to a variety of downstream graph-based tasks, offering significant benefits in scenarios where both privacy and performance are critical considerations.
\section{Related Work}\label{sec:Related_work}
\textbf{Private Deep Learning.}
DP and its variant, local differential privacy, provide robust and strong privacy guarantees, even against adversaries with prior knowledge. These techniques have been extensively applied across diverse domains~\cite{cao2017quantifying, ye2019privkv, ye2023stateful, ye2021beyond, Zhang2023trajectory, ye2021privkvm}, with a notable surge in their adoption for privacy-preserving deep learning.
Early work by Dwork \emph{et al.}~\cite{dwork2010boosting} introduced the concept of strong composition, which aims to provide privacy guarantees when combining multiple queries or updates. However, as pointed out by Abadi \emph{et al.}~\cite{abadi2016deep}, existing composition theorems lack the necessary precision to accurately evaluate the privacy cost in deep learning models. To address this issue, the MA mechanism was introduced, which tracks the logarithmic moments of privacy loss variables and provides more accurate privacy loss estimates when combining Gaussian mechanisms under random sampling.
Mironov \emph{et al.}~\cite{mironov2019r} propose a new analysis method named RDP, which surpasses the MA mechanism and greatly enhances DPSGD performance. Further research~\cite{chen2020stochastic, nasr2020improving, papernot2021tempered, tramer2020differentially} explores modifications to model structures or learning algorithms, such as replacing traditional activation functions in CNNs with smoother \emph{Sigmoid} functions. Other studies~\cite{xiang2019differentially, yu2019differentially} focus on optimizations like adaptive gradient clipping bounds or dynamic privacy budget partitioning.  

\textbf{Private Graph Learning.} 
The most closely related work is by Ahuja \emph{et al.}~\cite{ahuja2020differentially}, which combines SGM and DPSGD~\cite{abadi2016deep} for private learning from sparse location data. However, this method doesn't apply to graphs due to noise from complex node relationships. Peng \textit{et al.}~\cite{peng2021differentially} propose a decentralized framework for privacy-preserving learning of embeddings from multiple knowledge graphs. Han \textit{et al.}~\cite{han2022framework} develop differentially private knowledge graph embeddings. Pan \textit{et al.}~\cite{pan2022fedwalk} present a federated framework for unsupervised node embedding with DP and high communication efficiency. Despite these advancements, like Ahuja \emph{et al.}, these methods face utility issues due to their perturbation mechanisms. Yang \textit{et al.}~\cite{abadi2016deep} develop differentially private GAN and VAE models for graph synthesis and link prediction, but these approaches often converge prematurely with limited privacy budgets, reducing both privacy and utility.
Epasto \textit{et al}.~\cite{epasto2022differentially} present a differentially private graph learning algorithm that outputs an approximate personalized PageRank and have provably bounded sensitivity to input edges. 
Unfortunately, the proposed private PageRank only achieves the weak edge-level DP. 
Another area of research in differential private graph learning focuses on GNNs~\cite{olatunji2021releasing, daigavane2021node, zhang2024dpar, sajadmanesh2023gap, sajadmanesh2023progap, xiang2023preserving, ran2024differentially}. The aggregation perturbation technique is often used to ensure DP in GNNs. Unlike traditional DPSGD algorithms, many differentially private GNN methods perturb the aggregate information from the GNN neighborhood aggregation step. Yet, the aggregation perturbation faces compatibility issues with standard GNN architectures, requiring the re-perturbation of all aggregate outputs at each training iteration, which leads to high privacy costs.

\section{Conclusion and Future Work}\label{sec:conclu}
In this paper, we have presented a differentially private skip-gram for graphs via adversarial training, called AdvSGM. The main features lie in two aspects. First, we design a novel adversarial training module by introducing two optimizable noise terms in activation functions. Second, we achieve DP during optimization by fine-tuning the weighs between modules. Furthermore, we demonstrate that AdvSGM obeys node-level differential privacy. Extensive experiments on real-world graph datasets demonstrate that our solution outperforms state-of-the-art competitors. In our future work, we plan to extend our method to attribute graphs and matrix factorization-based network embeddings.

\section*{Acknowledgments}
This work was supported by the National Natural Science Foundation of China (Grant No: 92270123 and 62372122), and the Research Grants Council, Hong Kong SAR, China (Grant No: 15226221, 15224124, C2004-21GF and C2003-23Y).

\bibliographystyle{IEEEtran}
\bibliography{mybibfile}

\end{document}